\documentclass{article}

\usepackage{PRIMEarxiv}

\usepackage[utf8]{inputenc} % allow utf-8 input
\usepackage[T1]{fontenc}    % use 8-bit T1 fonts
\usepackage{hyperref}       % hyperlinks
\usepackage{url}            % simple URL typesetting
\usepackage{booktabs}       % professional-quality tables
\usepackage{amsfonts}       % blackboard math symbols
\usepackage{nicefrac}       % compact symbols for 1/2, etc.
\usepackage{microtype}      % microtypography
\usepackage{lipsum}
\usepackage{enumitem}

\usepackage{graphicx}
\usepackage{natbib}
\usepackage{xcolor,bm,tikz,bbm}
\usepackage{colortbl}
\usepackage{amsmath, amssymb, amsthm}

\graphicspath{{media/}}     % organize your images and other figures under media/ folder

\theoremstyle{plain}
\newtheorem{theorem}{Theorem}[section]

\newtheorem{lemma}[theorem]{Lemma}
\newtheorem{corollary}[theorem]{Corollary}
\theoremstyle{definition}

\theoremstyle{remark}


\title{Variational Trajectory Optimization of Anisotropic Diffusion Schedules
}

\author{
  Pengxi Liu\thanks{Equal contribution.} \\
  Department of Electrical and Computer Engineering \\
  Duke University \\
  \texttt{pengxi.liu@duke.edu}
  \And
  Zeyu Michael Li\footnotemark[1] \\
  Department of Electrical and Computer Engineering \\
  Duke University \\
  \texttt{zeyu.li030@duke.edu}
  \And
  Xiang Cheng \\
  Department of Electrical and Computer Engineering \\
  Duke University \\
  \texttt{xiang.cheng@duke.edu}
}

\begin{document}
\maketitle

\begin{abstract}
We introduce a variational framework for diffusion models with anisotropic noise schedules parameterized by a matrix‑valued path $M_t(\theta)$ that allocates noise across subspaces. Central to our framework is a trajectory‑level objective that jointly trains the score network and \emph{learns} $M_t(\theta)$, which encompasses general parameterization classes of matrix-valued noise schedules. We further derive an estimator for $\partial_\theta \nabla \log p_t$ that enables efficient optimization of the $M_t(\theta)$ schedule. For inference, we develop an efficiently-implementable reverse-ODE solver that is an anisotropic generalization of the second‑order Heun discretization algorithm. Across CIFAR‑10, AFHQv2, FFHQ, and ImageNet-64, our method consistently improves upon the baseline EDM model in all NFE regimes. Code is available at \url{https://github.com/lizeyu090312/anisotropic-diffusion-paper}.
\end{abstract}

% keywords can be removed
\keywords{Diffusion Model \and Anisotropic Diffusion}

\section{Introduction}
\begin{figure*}
    \centering
    \includegraphics[width=\linewidth]{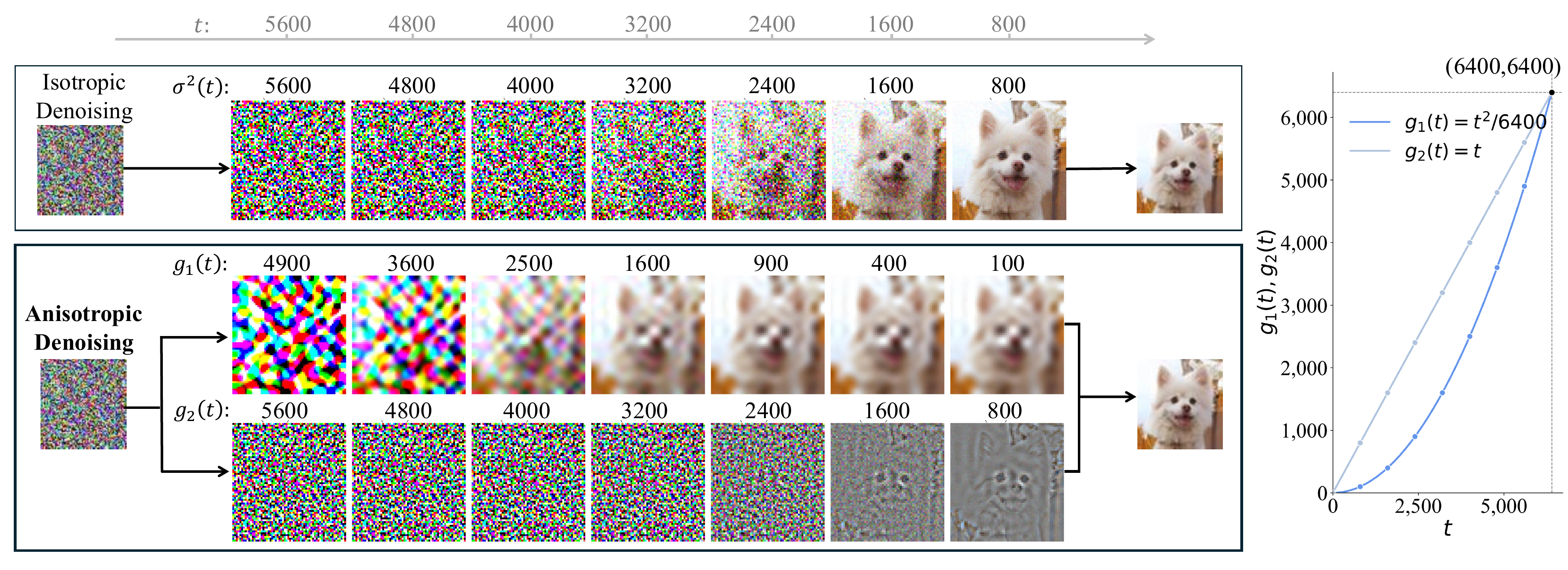}
    \caption{\textbf{Illustration of isotropic vs.\ anisotropic denoising.} Top: a standard isotropic sampler denoises all directions uniformly. Bottom: an anisotropic sampler with two DCT subspaces, $V_1$ (low frequency) and $V_2$ (high frequency) (Section~\ref{s:implementation_details}). Columns show intermediate reconstructions as $t$ decreases. The plot (right) displays learned subspace schedules $g_1(t)$ and $g_2(t)$; the former is denoised more aggressively, so low-frequency structure from $V_1$ emerges earlier, while high-frequency details from $V_2$ emerge later. \textbf{Illustration only:} in practice anisotropic and isotropic reconstruct different images, and the gap between $g_1$ and $g_2$ is typically smaller (see Fig.~\ref{fig:cifar10_schedule}--\ref{fig:imagenet_schedule}).}
    \label{fig:visulize_v_h}
\end{figure*}

Diffusion models generate samples by \emph{reversing} a gradual Gaussian noising process. Starting from data $x_0\sim p_0$, a forward process produces a family of noisy variables $(x_t)_{t\ge 0}$ with density $p_t$ that smoothly interpolates between the data distribution and a tractable Gaussian. A neural \emph{score} model (parameters $\phi$) is trained to approximate the score $\nabla_x\log p_t(x)$; given this score, sampling integrates a reverse-time SDE/ODE that transports $p_T$ back to $p_0$ \citep{ho2020denoising,song2021score_sde,karras2022elucidating}. Nearly all modern diffusion pipelines assume \emph{isotropic} forward noise, meaning that at time $t$, the covariance of total injected Gaussian noise is a scalar multiple of identity.

\textbf{Matrix-valued (anisotropic) schedules.} In this paper, we replace the scalar noise level by a \emph{matrix-valued} covariance trajectory $M_t$ and study how to \emph{learn} it. Concretely, in the standard variance-exploding (VE) / Brownian formulation, the isotropic process adds noise with covariance $tI$,
\begin{equation}
\begin{aligned}
& x_0\!\sim\!p_0,\quad d x_t=dB_t
\Leftrightarrow \qquad
 d x_t=-\tfrac12\nabla\log p_t(x_t)dt,
\label{e:iso-ve}
\end{aligned} 
\end{equation}
where $B_t$ is standard Brownian motion and $*$ denotes convolution. We generalize this to anisotropic diffusion by instead considering a forward diffusion driven by a \emph{matrix-valued diffusion coefficient}:
\begin{equation}
\begin{aligned}
& x_0\!\sim\!p_0,\quad d x_t=(\partial_t M_t)^{1/2}dB_t\Leftrightarrow \qquad 
 d x_t=-\tfrac12\partial_t M_t\nabla\log p_t(x_t)dt,
\label{e:anisotropic_diffusion}
\end{aligned} 
\end{equation}
where $M_t(\theta)$ is a matrix-valued noise-schedule, with arbitrary parameterization $\theta$, satisfying
\begin{equation}
  M_0=0,
  \qquad
  \partial_t M_t\succ 0 \qquad \text{for all } t.
  \label{e:Mt-psd}
\end{equation}

\textbf{The power of anisotropy.} Replacing a scalar schedule by a matrix path allows noise (and thus denoising effort) to be allocated differently across directions and time, better matching data geometry. Natural images concentrate energy in low spatial frequencies \citep{ruderman1993statistics}; latent diffusion effectively separates coarse structure from fine detail \citep{rombach2022high}; video models benefit from temporally structured priors or decomposed noise \citep{ge2023preserve,luo2023videofusion}; and multi-resolution generation often proceeds coarse-to-fine \citep{tian2024visual}. Figure~\ref{fig:visulize_v_h} illustrates one such effect: low-frequency content can be denoised earlier than high-frequency details when the schedule is allowed to differ across subspaces.

\textbf{A variational formulation of matrix trajectory is necessary.} The flexibility of $M_t$ comes with a price: the design space is enormous---instead of choosing a single scalar function, we must choose a trajectory in the cone of PSD matrices. In the isotropic VE case, the family of marginals is essentially one-dimensional: $p_t = p_0*\mathcal N(0,\sigma(t)^2 I)$ depends on $t$ only through the scalar variance $\sigma(t)^2$, so changing the schedule mainly \emph{reparameterizes time} along the same curve of distributions. In contrast, different anisotropic paths $M_t(\theta)$ generally produce genuinely different marginals $p_t(\cdot;\theta)$ (even up to time reparameterization), and hence different score targets and reverse-time dynamics. This makes hand-crafted anisotropy brittle and motivates a principled, data-driven approach.

%\textbf{This paper.} We propose a variational, trajectory-level objective that \emph{jointly} trains the score network parameters $\phi$ and \emph{learns} the schedule parameters $\theta$ defining $M_t(\theta)$. A key technical obstacle is that updating $\theta$ changes the entire forward family $p_t(\cdot;\theta)$, so schedule optimization requires access to $\partial_\theta \nabla\log p_t(x;\theta)$. We derive an efficient estimator for this term and, for inference, develop an anisotropic generalization of second-order Heun discretization that is implementable through structured matrix operations.

\subsection{Main contributions}
We develop a variational framework for \emph{learning} anisotropic diffusion trajectories $M_t(\theta)$ jointly with a score model (parameters $\phi$), rather than prescribing $M_t$ by hand. We summarize our technical and empirical contributions below

\paragraph{(1) A general variational framework for learning anisotropic diffusion models.}
\begin{enumerate}[leftmargin=*,nosep]
    \item We introduce a variational framework for anisotropic diffusion with \emph{matrix-valued} noise path $M_t(\theta)$. Our framework is fully general and supports broad classes of $M_t(\theta)$, including interpretable frequency-band schedules and data-adaptive (class-conditional) PCA schedules (Section~\ref{s:implementation_details}).
    \item We propose a trajectory-level score-matching objective $L(\theta,\phi)$ (Section~\ref{sec:pathwise-objective}) that jointly trains the score network and learns $M_t(\theta)$ by minimizing mismatch between ideal and learned denoising dynamics along the reverse trajectory (with a pathwise change-of-measure interpretation). 
    \item We derive anisotropic reverse-ODE solvers by generalizing Euler and second-order Heun updates to matrix trajectories (Section~\ref{sec:inference_matrix}), yielding closed-form steps expressed through increments of $M_t^{1/2}$ and implementable efficiently via structured matrix operations.
\end{enumerate}
\paragraph{(2) An efficient schedule-gradient estimator for matrix-valued trajectories.}
Optimizing $M_t(\theta)$ is challenging because changing $\theta$ changes the entire forward family $p_t(\cdot;\theta)$, and thus requires $\partial_\theta \nabla \log p_t(x;\theta)$, which is not available from a network trained at a single $\theta$.
We derive a plug-in estimator of $\partial_\theta \nabla \log p_t(x;\theta)$ using only higher-order \emph{$x$-directional} derivatives of the network, implementable in \textbf{three backward passes} and independent of $\dim(\theta)$ (Theorem~\ref{t:del_theta_score}).
We further introduce a \emph{flow parameterization} that stabilizes scale across noise levels and reduces estimator variance (Section~\ref{ss:del_theta_flow}).

\paragraph{(3) Empirical validation of anisotropic trajectory learning.}
Across CIFAR-10, AFHQv2, FFHQ, and ImageNet-64 (Section~\ref{s:experiment}), learned anisotropic trajectories are competitive with (and often improve upon) the EDM baseline under the standard evaluation protocol.
The gains persist across solver budgets and schedule families, supporting the view that \emph{learning the matrix trajectory improves score-matching and the resulting generative model}.

\subsection{Related Work}
\label{ss:related_work}

\paragraph{Structured noising and anisotropy.}
Prior work introduces non-isotropic structure via edge-aware perturbations \citep{vandersanden2024edge},
correlated/blue-noise masks \citep{huang2024blue}, subspace/frequency restrictions \citep{jing2022subspace},
and temporally structured priors for video \citep{luo2023videofusion,ge2023preserve,chang2025warped}.
In contrast, we learn a \emph{global}, monotone PSD \emph{matrix} trajectory $M_t(\theta)$ (e.g., DCT/PCA-parameterized)
and optimize it \emph{jointly} with the score network via a trajectory loss, allocating noise (and denoising effort)
across \emph{directions/subspaces}.

\paragraph{ELBO views and schedule learning.}
VDM \citep{kingma2021variational} and the ELBO reinterpretation of common diffusion losses \citep{kingma2023understanding}
relate \emph{scalar} schedule learning to reweighting \emph{noise levels}. MuLAN \citep{sahoo2024diffusion} learns
(possibly input-conditional) \emph{per-pixel} noise rates to tighten an ELBO for density estimation.
Our formulation differs in that (i) $M_t(\theta)$ is \emph{anisotropic} (reweights \emph{directions}, not just levels),
(ii) we derive a practical schedule-gradient estimator that accounts for the schedule-dependent score target,
and (iii) we optimize the schedule to improve \emph{sample quality}, yielding consistent FID gains.

%\paragraph{Few-step sampling.}
%Low-NFE improvements via solver/discretization or sampling-schedule optimization
%\citep{sabour2024align,xue2024accelerating,iclr_rebuttal7_dpm_solver} are complementary: they modify inference for a fixed
%trained model, whereas we modify the forward covariance path and training objective; combining both is a promising future direction.

\section{Preliminaries}
\label{sec:prelim}

\subsection{Anisotropic Diffusion: Process, Score, and Parameterizations}
\label{sec:prelim:process}

Recall the anisotropic diffusion process in \eqref{e:anisotropic_diffusion}. Let $M_t(\theta)$ denote the noise covariance at time $t$, parameterized by $\theta$.  $p_t$ as defined in \eqref{e:anisotropic_diffusion} has score given by
\begin{equation}
\begin{aligned}
    \nabla \log p_t(x;\theta) = M_t^{-1}(\theta) \Ep{x_0 | x_t = x}{x_0 - x_t},
    \label{e:anisotropic_score}
\end{aligned}
\end{equation}
where $(x_0, x_t)$ are defined by the joint distribution $x_0 \sim p_0$ and $x_t = x_0 + \N(0, M_t)$. We provide a short proof in Lemma \ref{l:anisotropic_score} in Appendix \ref{s:additional_lemmas}. In case of time-uniform isotropic diffusion (i.e. standard Brownian Motion), $M_t(\theta) = t I$, and the formula in \eqref{e:anisotropic_score} reduces to the standard score expression.

We parameterize a neural network $\net(x,t,\phi)$ to approximate the score, we also define $\flow$, a transformation of $\net$ whose norm is approximately time-invariant:
\begin{align}
&\net(x,t,\phi) \approx \nabla \log p_t(x;\theta)
\label{e:net}\\
&\flow(x,t,\phi) := M_t^{1/2}\net(x,t,\phi)
\label{e:flow-1}
\end{align}

%\begin{remark} In general, the score $\nabla \log p_t(x;\theta)$ for some anisotropic $M_t(\theta)$ cannot be obtained by any simple transformation of the anisotropic score of $p_s(x) * N(0,sI)$.
%\end{remark}
\paragraph{Continuous and discrete reverse ODE for anisotropic denoising.} Recall the \emph{forward ODE dynamics} as defined in \eqref{e:anisotropic_diffusion}: $d x_t=-\tfrac12\partial_t M_t\nabla\log p_t(x_t)dt$. Given $\net$ in \eqref{e:net}, the learned denoising ODE is given by the time-reversal of

\begin{equation}
\begin{aligned}
    & d \bar{x}_t = - \frac{1}{2} \del_t M_t(\theta) \net(\bar{x}_t. t, \phi) dt, %\qquad \Leftrightarrow \qquad d\bar{x}_{T-t} = \frac{1}{2} \del_t M_{T-t}(\theta)\net(\bar{x}_{T-t},T-t,\phi) dt.
    \label{e:net_forward_ode}
\end{aligned} 
\end{equation}

\paragraph{Variance-preserving anisotropic diffusion.} It is often more useful to consider the \emph{variance preserving} anisotropic diffusion, which is simply a time-dependent linear-transformation of \eqref{e:anisotropic_diffusion}. Define
\begin{equation}
\begin{aligned}
    x_t^{VP} := \lrp{I + M_t(\theta)}^{-1/2} x_t.
    \label{e:anisotropic_variance_preserving}
\end{aligned}
\end{equation}
The choice of $I$ above is based on the assumption that $Cov(x_0) \approx I$, which implies that $Cov(x^{VP}_t)\approx I$ for all $t$. Combining \eqref{e:anisotropic_variance_preserving} with \eqref{e:anisotropic_diffusion} gives the following dynamic:
\begin{equation}
\begin{aligned}
    &d x_t^{VP} = v(x_t,t; \theta) dt, \qquad  \text{where}\\
    &v(x,t;\theta) := -\frac{1}{2} (I+M_t(\theta))^{-1/2}\del_t M_t(\theta) \blue{\nabla \log p_t(x;\theta)} - \frac{1}{2} (I + M_t(\theta))^{-3/2} \del_t M_t(\theta) x,
    \label{e:true_velocity}
\end{aligned}
\end{equation}
where $v(x,t;\theta)$ is the \emph{ideal} velocity field. Analogously, we define the \emph{learned} velocity field 
\begin{equation}
\begin{aligned}
    &d \bar{x}_t^{VP} = \bar{v}(x_t,t; \theta) dt, \qquad  \text{where}\\
    & \bar{v}(x,t;\theta,\phi) := -\frac{1}{2} (I+M_t(\theta))^{-1/2}\del_t M_t(\theta) \red{\net(x,t,\phi)} - \frac{1}{2} (I + M_t(\theta))^{-3/2} \del_t M_t(\theta) x,
    \label{e:learned_velocity}
\end{aligned}
\end{equation}

\section{Trajectory-Level Score Matching Loss}
\label{sec:pathwise-objective}

This section introduces our central training objective \eqref{e:L_simplified}, which enables us to \textbf{jointly} learn a
\textbf{matrix-valued} noise path $\{M_t(\theta)\}_{t\in[0,T]}$ together with a \textbf{score network}
$\net(x,t,\phi)$. At a high level, the loss measures a
\textbf{trajectory-weighted score error} induced by the schedule $M_t(\theta)$, and it is designed so
that (for any fixed $\theta$) the optimal network recovers the true score (Lemma~\ref{l:mt_loss_score_matching}).

\subsection{Defining the Trajectory Loss}
\label{sec:pathwise-drift}

\paragraph{Loss definition.}
We define our trajectory-level training objective as
\begin{alignat*}{2}
    L(\theta,\phi)
    &:= \Ep{t\sim \mathrm{Unif}[0,T],\, x_0\sim p_0,\, \epsilon\sim \N(0,I)}{
    \lrn{
    W_t(\theta)\,
    \lrp{
        M_t(\theta)^{1/2}\net\lrp{x_0 + M_t(\theta)^{1/2}\epsilon,\ t,\ \phi}
        + \epsilon
    }
    }_2^2
    }\\
    &\text{where}\qquad
    W_t(\theta) := \lrp{I+M_t(\theta)}^{-1/2}(\del_t M_t(\theta)) M_t(\theta)^{1/2}.
    \numberthis \label{e:L_simplified}
\end{alignat*}
In words: we draw a noise level $t$, perturb data by $x_t := x_0 + M_t(\theta)^{1/2}\epsilon$, evaluate
the score network at $(x_t,t)$, and penalize its deviation from the \emph{ noise target}
$-\epsilon$, under a \emph{matrix-valued} weight $W_t(\theta)$ determined by the
trajectory $M_t(\theta)$. Since $x_t = x_0 + M_t(\theta)^{1/2}\epsilon$, we have
$M_t(\theta)^{-1/2}\epsilon = -M_t(\theta)^{-1}(x_0-x_t)$, so the network term inside
\eqref{e:L_simplified} is trained to match the conditional denoising direction. The operator
$W_t(\theta)$ serves as a \emph{matrix-valued} weighting schedule that can emphasize different subspaces
at different times, and it is exactly this dependence on $M_t(\theta)$ that allows us to \emph{optimize
the trajectory $\theta$}.

\paragraph{Equivalence of loss to velocity mismatch norm.}
Recall from Section~\ref{sec:prelim} that under the variance-preserving (VP) reparameterization
\eqref{e:anisotropic_variance_preserving}, the \emph{ideal} and \emph{learned} VP velocity fields are
$v(\cdot,t;\theta)$ and $\bar v(\cdot,t;\theta,\phi)$ as defined in \eqref{e:true_velocity} and
\eqref{e:learned_velocity}. A natural trajectory-level principle is to choose $(\theta,\phi)$ so that
the learned denoising dynamics match the ideal ones along the reverse trajectory. The continuous-time form of this principle is the path integral
\begin{equation}
\begin{aligned}
    \int_0^T \Ep{x_0\sim p_0,\, \epsilon\sim \N(0,I)}{
    \lrn{
        \bar v(x_t,t;\theta,\phi) - \tilde v(x_t,x_0,t;\theta)
    }_2^2
    }\,dt,
    \qquad x_t = x_0 + M_t(\theta)^{1/2}\epsilon,
\end{aligned}
\label{e:mt_loss}
\end{equation}
where $\tilde v(\cdot,x_0,t;\theta)$ is obtained from the ideal velocity $v(\cdot,t;\theta)$ by
replacing the intractable score $\nabla\log p_t(\cdot;\theta)$ with the single-sample proxy
$M_t(\theta)^{-1}(x_0-x_t)$ (see the score identity \eqref{e:anisotropic_score}):
\begin{equation}
\begin{aligned}
\tilde v(x_t,x_0,t;\theta)
:=&-\frac12 (I+M_t(\theta))^{-1/2}\del_t M_t(\theta)\,M_t(\theta)^{-1}(x_0-x_t)-\frac12 (I+M_t(\theta))^{-3/2}\del_t M_t(\theta)\,x_t.
\end{aligned}
\label{e:proxy-velocity}
\end{equation}
Expanding $\bar v$ and $\tilde v$, the common contraction term
$-\frac12 (I+M_t)^{-3/2}\del_t M_t\,x_t$ cancels, and \eqref{e:mt_loss} reduces to the compact
matrix-weighted score-matching form \eqref{e:L_simplified} (up to an overall constant factor and the
standard Monte Carlo approximation of the time integral by sampling $t$).

\subsection{Key Properties and Interpretation}

\paragraph{Exact score at optimality.}
As standard in the literature \citep{song2019generative,song2021score_sde}, we show that for any fixed
schedule $M_t(\theta)$ satisfying \eqref{e:Mt-psd}, the optimal network for $L(\theta,\phi)$ recovers
the true score in the infinite-data/infinite-capacity limit:
\begin{lemma}[Exact score at optimality]
\label{l:mt_loss_score_matching}
Fix any schedule $M_t(\theta)$ satisfying \eqref{e:Mt-psd}. In the limit of infinite data and model
capacity, the minimizer $\phi^*(\theta) = \arg\min_{\phi} L(\theta, \phi)$ satisfies
\begin{equation}
  \net(x,t,\phi^*(\theta)) = \nabla_x \log p_t(x;\theta).
  \label{e:score-optimality}
\end{equation}
\end{lemma}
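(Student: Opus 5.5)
The plan is to show that, for each fixed $t$, the integrand of $L(\theta,\phi)$ is a weighted conditional least-squares problem whose pointwise minimizer is the conditional mean of the target, and then to identify that conditional mean with the score via \eqref{e:anisotropic_score}. With infinite capacity we may treat $s(x,t) := \net(x,t,\phi)$ as an arbitrary measurable function of $(x,t)$. Then minimizing $L$ over $\phi$ is the same as minimizing, for almost every $t$ and $p_t$-almost every $x$, a separate quadratic functional of the vector $s(x,t)$.

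\textbf{Step 1: change variables to $(x_0,x_t)$.} Write $x_t = x_0 + M_t^{1/2}\epsilon$, so that $\epsilon = M_t^{-1/2}(x_t - x_0)$. Here $M_t \succ 0$ for $t>0$, because $M_0=0$ and $\partial_t M_t \succ 0$. The integrand then becomes
\[
\bigl\|W_t M_t^{1/2}\bigl(s(x_t,t) - M_t^{-1}(x_0 - x_t)\bigr)\bigr\|_2^2 .
\]
Set $A_t := W_t M_t^{1/2} = (I+M_t)^{-1/2}\partial_t M_t\, M_t$. We need $A_t$ to be invertible for $t>0$. This holds because it is a product of three invertible matrices: $(I+M_t)^{-1/2}$ and $M_t$ are positive definite, and $\partial_t M_t \succ 0$ by \eqref{e:Mt-psd}.

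\textbf{Step 2: pointwise minimization.} Condition on $x_t = x$ and let $Y := M_t^{-1}(x_0 - x_t)$. Expand $\mathbb{E}\bigl[\|A_t(s - Y)\|^2 \mid x_t = x\bigr]$ around $m(x) := \mathbb{E}[Y \mid x_t = x]$:
\[
\mathbb{E}\bigl[\|A_t(s-Y)\|^2 \mid x_t=x\bigr] = \|A_t(s - m(x))\|^2 + \mathbb{E}\bigl[\|A_t(Y - m(x))\|^2 \mid x_t=x\bigr].
\]
The cross term vanishes because $\mathbb{E}[Y - m(x)\mid x_t=x]=0$ and $A_t$ is deterministic given $t$. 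The second term does not depend on $s$. Since $A_t$ is invertible, the first term is zero if and only if $s(x,t) = m(x)$. This is the unique minimizer up to null sets, provided $L$ is finite for some $s$, which is the standard assumption that $x_0$ has a finite second moment.

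\textbf{Step 3: identify the minimizer.} By \eqref{e:anisotropic_score} (Lemma~\ref{l:anisotropic_score}), $m(x) = M_t^{-1}\,\mathbb{E}[x_0 - x_t \mid x_t=x] = \nabla \log p_t(x;\theta)$. Integrating over $t\sim\mathrm{Unif}[0,T]$ preserves the pointwise minimizer, which gives \eqref{e:score-optimality} for almost every $(x,t)$ with $t>0$.

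The main obstacle is the invertibility of $A_t$, which is what makes the minimizer unique rather than merely one minimizer among many. It fails only at $t=0$, a measure-zero endpoint, so that case should be excluded explicitly. Beyond that, the argument needs only the usual integrability assumptions so that the conditional expectations exist and the infinite-capacity idealization can be applied pointwise.
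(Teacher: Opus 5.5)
Your proof is correct and follows essentially the same route as the paper's: condition on $x_t$, note that the pointwise minimizer of the weighted least-squares problem is the conditional mean of $M_t^{-1}(x_0-x_t)$, and identify it with the score via Lemma~\ref{l:anisotropic_score}. (The paper runs this on the velocity-mismatch form \eqref{e:mt_loss} rather than on \eqref{e:L_simplified} directly, but the weight is invertible for $t>0$ either way.) Writing the quadratic form with $A_t^\top A_t$ and using invertibility of $A_t$ is slightly cleaner than the paper, which invokes commutation of $M_t$ and $\partial_t M_t$ to call its weight PSD and does not address uniqueness; the only slip is your side remark on finiteness, since $M_t^{-1}(x_0-x_t)=-M_t^{-1/2}\epsilon$ gives $L=\frac{1}{T}\int_0^T\|W_t\|_F^2\,dt$ at $s\equiv 0$, so no moment condition on $x_0$ is involved.
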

We defer the proof of Lemma~\ref{l:mt_loss_score_matching} to Appendix~\ref{ss:l:mt_loss_score_matching}.

\paragraph{Connection to path-wise change-of-measure (Girsanov).}
The objective \eqref{e:mt_loss} is a path-wise integral of a velocity mismatch, aligning with the
classical change-of-measure control suggested by Girsanov's theorem. Concretely, for two processes
$dx_t = v(x_t,t)\,dt + dB_t$ and $d\bar x_t = \bar v(\bar x_t,t)\,dt + dB_t$, the path-space KL is
bounded by a constant times $\int_0^T \E{\|\bar v(x_t,t)-v(x_t,t)\|_2^2}\,dt$ (under standard
regularity, e.g. Novikov's condition). Related bounds have been used to control reverse
\emph{SDE} discretization error \citep{chen2022sampling}. Our loss can be viewed as a tractable
surrogate in this spirit: it replaces the intractable score-dependent $v$ with the proxy velocity
$\tilde v$ in \eqref{e:proxy-velocity}.

\paragraph{Consistency check: isotropic trajectory loss reduces to weighted score matching.}
When $M_t(\theta)=g(t;\theta)I$ for a scalar function $g$, the weighting operator $W_t(\theta)$ becomes
the scalar function $\frac{\del_t g(t;\theta)}{\sqrt{1+g(t;\theta)}}$. In
Lemma~\ref{l:weighing_is_geodesic} (Appendix~\ref{s:additional_lemmas}), we formally prove the
equivalence to a standard weighted score-matching objective. Thus, in the isotropic case, choosing
$g(t;\theta)$ is essentially choosing a weighting over noise levels. Our main focus, however, is that
\eqref{e:L_simplified} enables \textbf{optimization over matrix-valued trajectories}.

\section{Optimization of the Trajectory Loss}
\label{sec:optimize}

This section explains how we \emph{learn} the matrix-valued trajectory $M_t(\theta)$.
We first emphasize the subtle but crucial relationship between $\theta$ and the learned network: the parametric model
$\net(x,t,\phi)$ has \emph{no explicit} dependence on $\theta$, but the \emph{optimal} network for a
given schedule \emph{does} depend on $\theta$, because it approximates the score of the
schedule-dependent marginal $p_t(\cdot;\theta)=p_0*\N(0,M_t(\theta))$. We therefore present schedule learning as the following bilevel problem:

Let $L(\theta,\phi)$ be as defined in \eqref{e:L_simplified}. Define
\begin{equation}
\begin{aligned}
\phi^*(\theta) &\in \arg\min_{\phi} L(\theta,\phi), \qquad \Leftrightarrow \qquad H(\theta) &:= L(\theta,\phi^*(\theta)) = \min_{\phi} L(\theta,\phi).
\end{aligned}
\label{e:bilevel}
\end{equation}
From \eqref{e:bilevel}, the joint minimization over $(\theta,\phi)$ is equivalent to
\begin{align*}
    \min_{\theta,\phi} L(\theta,\phi) = \min_{\theta} H(\theta).
\end{align*}

%and we introduce the \emph{optimal score model}
%\begin{equation}
%\net^*_\theta(x,t) := \net(x,t,\phi^*(\theta)).
%\label{e:net-star}
%\end{equation}
Lemma~\ref{l:mt_loss_score_matching} implies that, in the infinite-data/capacity limit, $\net(x,t,\phi^*(\theta))=\nabla_x\log p_t(x;\theta)$ for any $\theta$.

\paragraph{The challenge of computing $\partial_\theta H(\theta)$:}
When optimizing $\theta$ in \eqref{e:bilevel}, we need $\partial_\theta H(\theta)$. Because $H(\theta)$ evaluates the loss at the \emph{$\theta$-dependent} optimum $\phi^*(\theta)$, differentiating $H$ involves computing the derivative of the optimum score field wrt $\theta$, i.e. $\del_\theta \net(x,t,\phi^*(\theta))\approx \del_\theta \nabla \log p_t(x,\theta)$. Even though $\net(x,t,\phi)$ does not depend on $\theta$, an implicit dependency on $\theta$ is introduced via $\phi^*(\theta)$. 

The difficulty arises because at any point during training, \textbf{we only observe the current value of $\phi$.} There is no simple way to directly compute $\del_\theta \net(x,t,\phi^*(\theta))$ or even $\partial_\theta \phi^*(\theta)$. However, there is remarkable structure in $p_t(x,\theta)= p_0 * \mathcal{N}(0, M_t(\theta))$. By exploiting this fact, we show that $\del_\theta \nabla \log p_t(x,\theta)$ can be written using \textbf{only derivatives wrt $x$, without explicit derivatives wrt $\theta$.} In turn, this enables us to approximate $\del_\theta \net(x,t,\phi^*(\theta))$ using only $x$-derivatives of the network.

\subsection{A Plug-in Approximation for $\partial_\theta \nabla \log p_t$ via Stochastic Calculus}
\label{ss:optimization_theorem}
Below we provide an identity for expressing the $\theta$-derivative of the true score $\partial_{\theta} \nabla \log p_t(x,\theta)$ using terms that only depend on entries of $\nabla^2 \log p_t(x,\theta)$.

\begin{theorem}
\label{t:del_theta_score}
Let $\theta\in\Re^c$ and let $j\in\{1,\dots,c\}$ be any coordinate. Then for all $x\in\Re^d$,
\begin{equation}
\begin{aligned}
\partial_{\theta_j}\nabla \log p_t(x;\theta)
=&\ \frac{1}{2}\sum_{i=1}^d
\partial_r\partial_s\,\nabla \log p_t\!\Big(x+r e_i+s\,\partial_{\theta_j}M_t(\theta)\,e_i;\theta\Big)\Big|_{r=s=0}\\
&\ +\ \partial_s\,\nabla \log p_t\!\Big(x+s\,\partial_{\theta_j}M_t(\theta)\,\nabla \log p_t(x;\theta);\theta\Big)\Big|_{s=0}.
\end{aligned}
\label{e:del_theta_score}
\end{equation}
Let $\phi^*(\theta)$ be defined as in \eqref{e:bilevel}. Under
sufficient model capacity and data,
\begin{equation}
\begin{aligned}
\partial_{\theta_j}\net(x,t,\phi^*(\theta))
=&\ \frac{1}{2}\sum_{i=1}^d
\partial_r\partial_s\,\net\Big(x+r e_i+s\,\partial_{\theta_j}M_t(\theta)\,e_i,\ t, \phi^*(\theta)\Big)\Big|_{r=s=0}\\
&\ +\ \partial_s\,\net\!\Big(x+s\,\partial_{\theta_j}M_t(\theta)\,\net(x,t,\phi^*(\theta)),\ t,\phi^*(\theta)\Big)\Big|_{s=0}.
\end{aligned}
\label{e:del_theta_net}
\end{equation}
\end{theorem}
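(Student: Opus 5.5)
The plan is to exploit the fact that $p_t(\cdot;\theta)=p_0*\N(0,M_t(\theta))$ solves a \emph{matrix-parameter heat equation}. Write $A:=\partial_{\theta_j}M_t(\theta)$, which is symmetric. The first step is to show
\begin{equation*}
\partial_{\theta_j} p_t(x;\theta)=\tfrac12\,\mathrm{tr}\big(A\,\nabla^2 p_t(x;\theta)\big)=\tfrac12\sum_{a,b}A_{ab}\,\partial_a\partial_b p_t(x;\theta).
\end{equation*}
I would prove this on the Fourier side. The transform is $\hat p_t(\xi)=\hat p_0(\xi)\exp(-\tfrac12\xi^\top M_t(\theta)\xi)$, so differentiating in $\theta_j$ multiplies by $-\tfrac12\xi^\top A\xi$, which is exactly the symbol of $\tfrac12\mathrm{tr}(A\nabla^2)$. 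Alternatively, one can differentiate the Gaussian density $\N(y;0,M)$ in $M$ directly and use the classical identity $\partial_{M_{ab}}\N=\tfrac12\partial_a\partial_b\N$ for the symmetrized parameterization. Either way, we need $M_t(\theta)\succ0$ (true for $t>0$ by \eqref{e:Mt-psd}) so that $p_t$ is smooth and positive, and so that differentiation under the convolution integral is justified by dominated convergence.

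Next, pass to the logarithm. Write $\ell=\log p_t$, $s=\nabla\ell$ and $H=\nabla^2\ell$. Using $\nabla^2 p/p=H+ss^\top$, the heat equation becomes
\begin{equation*}
\partial_{\theta_j}\ell=\tfrac12\big(\mathrm{tr}(AH)+s^\top A s\big).
\end{equation*}
Taking $\nabla_x$ and commuting $\partial_{\theta_j}$ with $\nabla_x$ (both are smooth) gives
\begin{equation*}
\partial_{\theta_j}s=\tfrac12\nabla\,\mathrm{tr}(AH)+H A s,
\end{equation*}
where symmetry of $A$ and $H$ gives $\tfrac12\nabla(s^\top As)=HAs$.

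Then I would match these two terms to \eqref{e:del_theta_score}. For the first term, the $k$-th component of $\partial_r\partial_s\nabla\ell(x+re_i+sAe_i)|_{0}$ equals $\sum_b\partial_i\partial_b\partial_k\ell\,A_{bi}$. Summing over $i$ yields $\partial_k\sum_{i,b}H_{ib}A_{bi}=\partial_k\,\mathrm{tr}(HA)$, which with the factor $\tfrac12$ is the first term. For the second term, the directional derivative $\partial_s\nabla\ell(x+sAs(x))|_{s=0}=H A s$, noting that the direction $As(x)$ is frozen at $x$. This establishes \eqref{e:del_theta_score}.

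Finally, \eqref{e:del_theta_net} follows from Lemma~\ref{l:mt_loss_score_matching}. Under sufficient capacity and data, $\net(\cdot,t,\phi^*(\theta))=\nabla\log p_t(\cdot;\theta)$ identically in $x$, and this holds for every $\theta$. The left sides therefore agree as functions of $\theta$, so their $\theta_j$-derivatives agree. All $x$-directional derivatives of the network at fixed $\phi^*(\theta)$ likewise coincide with those of the true score, so substituting into \eqref{e:del_theta_score} gives the claim.

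I expect the main obstacle to be the regularity needed for the first step: interchanging $\partial_\theta$ and $\nabla_x$ with the convolution integral, and ensuring $\log p_t$ is $C^3$. This needs $M_t\succ0$ and mild moment assumptions on $p_0$. The Gaussian kernel's rapid decay handles it, but it must be stated as an assumption (for example, finite second moments of $p_0$ and $t>0$).
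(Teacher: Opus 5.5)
Your proposal is correct and follows essentially the same route as the paper, which also starts from the $\theta$-heat equation $\partial_\theta p_t=\frac12\mathrm{div}(A\nabla p_t)$ (your $\frac12\mathrm{tr}(A\nabla^2 p_t)$), divides by $p_t$, takes $\nabla_x$ to obtain $\frac12\nabla\,\mathrm{tr}(AH)+HAs$, rewrites both terms as the stated directional derivatives, and then transfers the identity to the network via Lemma~\ref{l:mt_loss_score_matching}. Your Fourier-side derivation justifies the heat equation, which the paper only asserts; note that it needs no moment assumption on $p_0$, because for $M_t(\theta)\succ 0$ the Gaussian kernel and all its $x$- and $\theta$-derivatives are bounded uniformly in $x_0$ (locally in $\theta$), so dominated convergence works for any probability measure $p_0$.
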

We defer the proof to Appendix~\ref{ss:t:del_theta_score}.

\subsection{Flow Parameterization and Variance Reduction}
\label{ss:del_theta_flow}
A practical concern is that $\|\net(x,t,\phi)\|_2$ typically scales like $\|M_t(\theta)^{-1/2}\|_2$,
which can vary significantly across noise levels and inflate gradient variance.
We therefore work with the normalized vector field
\begin{equation}
\flow(x,t,\phi) := M_t(\theta)^{1/2}\,\net(x,t,\phi),
\label{e:flow-2}
\end{equation}
whose scale is approximately time-invariant. Using Theorem~\ref{t:del_theta_score}, we can also
express $\partial_\theta \flow$ in terms of $x$-directional derivatives:

\begin{corollary}
\label{c:del_theta_flow}
Under the same setup as Theorem~\ref{t:del_theta_score},
\[
\begin{aligned}
\partial_\theta \flow(x,t,\phi^*(\theta))
=&\ \frac{1}{2}\sum_{i=1}^d
\partial_r\partial_s\,\flow\!\Big(x+r e_i+s\,\partial_\theta M_t(\theta)\,e_i,\ t,\phi^*(\theta)\Big)\Big|_{r=s=0}\\
&\ +\ \partial_s\,\flow\!\Big(x+s\,M_t(\theta)^{-1/2}\,\partial_\theta M_t(\theta)\,\flow(x,t,\phi^*(\theta)),\ t,\phi^*(\theta)\Big)\Big|_{s=0}\\
&\ +\ \frac{1}{2}\,M_t(\theta)^{-1}\big(\partial_\theta M_t(\theta)\big)\,\flow(x,t,\phi^*(\theta)).
\end{aligned}
\]
\end{corollary}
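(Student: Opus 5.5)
The plan is to derive the corollary directly from Theorem~\ref{t:del_theta_score} by the product rule. We apply $\partial_\theta$ to the definition $\flow(x,t,\phi^*(\theta)) = M_t(\theta)^{1/2}\net(x,t,\phi^*(\theta))$; throughout, $\partial_\theta$ means a single coordinate $\partial_{\theta_j}$. This gives
\[
\partial_\theta \flow = \big(\partial_\theta M_t^{1/2}\big)\net + M_t^{1/2}\,\partial_\theta \net .
\]
We then substitute \eqref{e:del_theta_net} for $\partial_\theta\net$. Because $M_t^{1/2}$ does not depend on $x$, it commutes with the $x$-directional derivatives $\partial_r,\partial_s$. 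The second-order sum therefore becomes
\[
\frac12\sum_i \partial_r\partial_s\, M_t^{1/2}\net(\cdots) = \frac12\sum_i\partial_r\partial_s\,\flow(\cdots),
\]
which is exactly the first line of the corollary.

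For the first-order term, we write $\net(x,t,\phi^*) = M_t^{-1/2}\flow(x,t,\phi^*)$ inside the displacement. The term becomes
\[
M_t^{1/2}\,\partial_s \net\big(x+s\,\partial_\theta M_t\, M_t^{-1/2}\flow,\ t,\phi^*\big)\big|_{s=0} = \partial_s\flow\big(x+s\,\partial_\theta M_t\,M_t^{-1/2}\flow,\ t,\phi^*\big)\big|_{s=0}.
\]
Comparing with the stated displacement $M_t^{-1/2}\partial_\theta M_t\,\flow$, the two agree whenever $M_t$ and $\partial_\theta M_t$ commute. That holds for the DCT/PCA parameterizations, where all $M_t(\theta)$ are simultaneously diagonalizable in a fixed basis. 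I would state this commuting assumption explicitly, since without it the ordering in the statement would need adjusting.

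The remaining term is $(\partial_\theta M_t^{1/2})\net$. Under commutativity we have $\partial_\theta M_t^{1/2} = \tfrac12 M_t^{-1/2}\partial_\theta M_t$. Hence
\[
(\partial_\theta M_t^{1/2})\net = \tfrac12 M_t^{-1/2}\partial_\theta M_t\, M_t^{-1/2}\flow = \tfrac12 M_t^{-1}\partial_\theta M_t\,\flow,
\]
which is the final line of the corollary.

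The main obstacle is the derivative of the matrix square root together with the ordering of non-commuting factors. In general, $\partial_\theta M^{1/2}$ solves the Sylvester equation $M^{1/2}X + XM^{1/2} = \partial_\theta M$, and it has the closed form $\tfrac12 M^{-1/2}\partial_\theta M$ only when $M$ and $\partial_\theta M$ commute. I would first verify the identities in the commuting case, which covers the paper's structured schedules. In the general case I would note that the last two terms must be interpreted with the Sylvester solution and the corresponding ordering.
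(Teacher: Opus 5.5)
Your proposal is correct and is essentially the paper's proof: apply the product rule to $\flow = M_t^{1/2}\net$, substitute \eqref{e:del_theta_net}, move the $x$-independent factor $M_t^{1/2}$ inside the directional derivatives, and rewrite $\net = M_t^{-1/2}\flow$ both in the displacement and in the extra term. The paper uses $\partial_\theta M_t^{1/2} = \tfrac12 M_t^{-1/2}\partial_\theta M_t$ and the reordering $\partial_\theta M_t\, M_t^{-1/2} = M_t^{-1/2}\partial_\theta M_t$ without stating why they hold; your explicit assumption that $M_t$ and $\partial_\theta M_t$ commute, which is satisfied by the projector families of Section~\ref{s:implementation_details}, is exactly what that argument silently relies on.
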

We defer the proof of Corollary~\ref{c:del_theta_flow} to Appendix~\ref{ss:c:del_theta_flow}.

\subsection{Final Optimization Formulation for $H(\theta)=\min_\phi L(\theta,\phi)$}
\label{ss:final_opt}

We now clarify how the schedule gradient uses \eqref{e:del_theta_net} (or equivalently
Corollary~\ref{c:del_theta_flow}). We emphasize that the additional $\partial_\theta \net(x,t,\phi^*(\theta))$ term appears when differentiating the \textbf{outer value function} $H(\theta)=\min_\phi L(\theta,\phi)$, not when differentiating $L(\theta,\phi)$ with $\phi$ held fixed.

For notational clarity we treat $\theta$ as a scalar below; the vector case follows by applying the same derivation to each coordinate $\theta_j$. Let $x_0\sim p_0$, $\epsilon\sim\N(0,I)$, sample $t$ as in Section~\ref{sec:pathwise-objective}, and set
$x_t:=x_0+M_t(\theta)^{1/2}\epsilon$. Using the flow form of the loss, we verify that $L(\theta,\phi)$ of \eqref{e:L_simplified} is equivalent to
\begin{alignat*}{2}
L(\theta,\phi)
&= \Ep{x_0,\epsilon,t}{\Big\| {W}_t(\theta)\big(\flow(x_t,t,\phi)+\epsilon\big)\Big\|_2^2}
\numberthis \label{e:L_simplified2}
&\qquad
{W}_t(\theta)
:= (I+M_t(\theta))^{-1/2}\,\partial_t M_t(\theta)\,M_t(\theta)^{-1/2}.
\end{alignat*}
We define the convenient notation $\ell(v,\epsilon,W):=\|W(\theta)(v+\epsilon)\|_2^2$, so that $L(\theta,\phi) = \Ep{x_0,\epsilon,t}{\ell(\flow(x_t,t,\phi),\epsilon, W_t(\theta))}$.

\paragraph{Chain rule for the outer objective.}
Define $\flow^*_\theta(x,t):=\flow(x,t,\phi^*(\theta))$ and recall
$H(\theta)=L(\theta,\phi^*(\theta))$. Differentiating gives
\begin{equation}
\begin{aligned}
\partial_\theta H(\theta)
=
\underbrace{\partial_\theta L(\theta,\phi)\Big|_{\phi=\phi^*(\theta)}}_{\text{explicit dependence through }M_t(\theta)}
\;+\;
\underbrace{\Ep{x_0,\epsilon,t}{\Big\langle
\nabla_{v}\ell(\flow(x_t,t,\phi^*(\theta)),\epsilon,W_t(\theta)),\ \partial_\theta \flow(x_t,t,\phi^*(\theta))
\Big\rangle}}_{\text{implicit dependence through the optimal score/flow field}},
\end{aligned}
\label{e:optimize_L_score}
\end{equation}
The second term on the RHS above uses the
(per-sample) gradient of $\ell(v,\epsilon,t)$ with respect to $v$ given by $\nabla_{\flow}\ell_t = 2\,\hat{W}_t(\theta)^\top\hat{W}_t(\theta)\,\big(v+\epsilon\big)$, this term isolates the dependence of $H(\theta)$ on $\phi^*(\theta)$. We estimate $\partial_\theta \flow^*_\theta(x_t,t)$ using
Corollary~\ref{c:del_theta_flow} (equivalently, estimate $\partial_\theta \net^*_\theta$ using
\eqref{e:del_theta_net}). In practice, the expectation in \eqref{e:optimize_L_score} is approximated by minibatch averages over
samples $(x_0^{(j)},\epsilon^{(j)},t^{(j)})$, for $j=1...\text{BatchSize}$.

\section{Denoising Algorithms for Matrix Trajectories}
\label{sec:inference_matrix}

This section describes sampling (denoising) given (i) a learned PSD matrix trajectory $M_t(\theta)$ and
(ii) a trained network $\flow(x,t,\phi)$, where
\[
\flow(x,t,\phi) \approx M_t(\theta)^{1/2}\,\nabla_x \log p_t(x;\theta).
\]
We focus on reverse-time ODE sampling, which is the inference method used throughout our experiments. We remark that our method generalizes easily to the reverse-time SDE setting.

\subsection{Reverse-time ODE in flow form and initialization}
Using $\net(x,t,\phi)=M_t(\theta)^{-1/2}\flow(x,t,\phi)$, and the learned ODE from \eqref{e:net_forward_ode}, the reverse-time ODE is equivalent to the time-reversal of the following:
\begin{equation}
d\bar x_t  
=
-\frac12\,\partial_t M_t(\theta)\,M_t(\theta)^{-1/2}\,\flow(\bar x_t,t,\phi)\,dt,
\label{e:reverse_ode_flow}
\end{equation}

For initialization we sample $\xi\sim\mathcal N(0,I)$ and set
\begin{equation}
\bar x_T := M_T(\theta)^{1/2}\xi,
\label{e:init}
\end{equation}
with $T$ chosen large enough that $p_T$ is close to a centered Gaussian.

\paragraph{Why the solver uses increments of $M_t^{1/2}$.}
In this section, we will assume that the matrices $M_t(\theta)$ share eigenvectors and therefore commute with $\partial_t M_t(\theta)$. This assumption is not necessary, but helps significantly simplify the algebra. In Section \ref{s:implementation_details}, we discuss numerous examples of families of $M_t(\theta)$ which satisfy this assumption. Under this commutation,
\[
\frac12\,\partial_t M_t(\theta)\,M_t(\theta)^{-1/2}
=
\partial_t\!\big(M_t(\theta)^{1/2}\big),
\]
so \eqref{e:reverse_ode_flow} becomes
\begin{equation}
d\bar x_t = -\partial_t\!\big(M_t(\theta)^{1/2}\big)\,\flow(\bar x_t,t,\phi)\,dt.
\label{e:reverse_ode_sqrt}
\end{equation}
This reveals that the natural step size for anisotropic denoising is the \emph{matrix increment}
$\Delta M_t^{1/2}$, directly generalizing the scalar EDM viewpoint where $M_t=\sigma(t)^2I$ and
$\Delta M_k^{1/2}=(\sigma_{k-1}-\sigma_k)I$.

\subsection{Reverse Euler discretization}
\label{ss:euler_anisotropic}

Let $K$ be the number of solver steps and choose a discretization grid
\begin{equation}
0=t_0 < t_1 < \cdots < t_K = T.
\label{e:grid}
\end{equation}
Reverse-time integration proceeds from $t_K$ down to $t_0$. For convenience, define
\begin{equation}
U_k := M_{t_k}(\theta)^{1/2},\qquad
\Delta U_k := U_{k}-U_{k-1} \quad (k=1,\dots,K).
\label{e:Uk}
\end{equation}

A first-order explicit Euler discretization of the \emph{time-reversal} of \eqref{e:reverse_ode_sqrt} yields
\begin{equation}
\bar x_{t_{k-1}}
=
\bar x_{t_k} + \Delta U_k\,\flow(\bar x_{t_k},t_k,\phi),
\qquad k=1,\dots,K.
\label{e:reverse_euler}
\end{equation}
We present the above Euler-Discretization scheme \eqref{e:reverse_euler} mainly as a baseline derivation; in our experiments, second-order Heun
consistently dominates Euler in FID per NFE.

\subsection{Heun’s second-order method for matrix trajectories}
\label{ss:heun_anisotropic}
We now present the matrix-valued generalization of Heun's second-order algorithm. We inherit the notation from Section \ref{ss:euler_anisotropic} above. Fix a secondary evaluation time $\hat t_k\in[t_{k-1},t_k]$ and denote $\hat U_k := M_{\hat t_k}(\theta)^{1/2}$.
Let
\[
u_k := -\flow(\bar x_{t_k},t_k,\phi).
\]
We form a predictor at $\hat t_k$ by
\begin{equation}
\hat x_{\hat t_k}
=
\bar x_{t_k} + (\hat U_k-U_k)\,u_k,
\qquad
\hat u_k := -\flow(\hat x_{\hat t_k},\hat t_k,\phi).
\label{e:heun_predictor}
\end{equation}

At any time $s\in[t_{k-1},t_k]$, we approximate the intermediate flow field at $s$ by the affine model
\[
\tilde u(s)
:=
u_k + (U(s)-U_k)\,(\hat U_k-U_k)^{-1}\,(\hat u_k-u_k),
\qquad U(s):=M_{s}(\theta)^{1/2}.
\]
Substituting this approximation into \eqref{e:reverse_ode_sqrt} and integrating yields the closed-form
second-order update
\begin{equation}
\bar x_{t_{k-1}}
=
\bar x_{t_k}
+
\Delta U_k\,u_k
+
\frac12\,(\Delta U_k)^2\,(\hat U_k-U_k)^{-1}\,(\hat u_k-u_k).
\label{e:heun_matrix}
\end{equation}
We defer the proof to Lemma \ref{l:heun_simplified_computation} in the appendix. When $\hat t_k=t_{k-1}$ (endpoint choice), we have $\hat U_k=U_{k-1}$ and \eqref{e:heun_matrix} reduces to the
familiar Heun form
\[
\bar x_{t_{k-1}}
=
\bar x_{t_k}
+
\Delta U_k\,\frac{u_k+\hat u_k}{2},
\]
which matches the standard scalar EDM/Heun update when $M_t=\sigma(t)^2I$.
In experiments we consider two choices of $\hat{t}_k$, i.e. $\hat t_k=t_{k-1}$ (endpoint) and $\hat t_k=(t_{k-1}+t_k)/2$ (midpoint).

The solver grid
$\{t_k\}_{k=0}^K$ is treated as an orthogonal design choice. In our experiments, for given total step $K$, we take $t_1...t_K$ to be a uniform grid over $[0...T]$.

\paragraph{Efficient implementation under structured parameterizations.}
All updates require only (i) matrix--vector products with $M_t(\theta)^{1/2}$ and (ii) product of 
$(\hat U_k-U_k)^{-1}$ with vectors. Under the projector-based families in Section~\ref{s:implementation_details},
these operations reduce to inexpensive subspace-wise scalings (and subspace-wise pseudoinverses), avoiding explicit
$d\times d$ matrix square roots or dense linear algebra.

\section{Practical Choices of Matrix Schedule $M_t(\theta)$}
\label{s:implementation_details}

The framework in Sections~\ref{sec:pathwise-objective}--\ref{sec:inference_matrix} applies to arbitrary monotone PSD
matrix trajectories $M_t(\theta)$. Below, we demonstrate a number of specific parametric families of $M_t(\theta)$. In each case, we verify that the parameterization satisfies:
(i) \textbf{monotonicity} ($\partial_t M_t(\theta)\succ 0$),
(ii) \textbf{efficient evaluation} of matrix functions appearing in the loss and sampler
(e.g.\ $M_t^{\pm 1/2}$ and $(I+M_t)^{-1/2}$),
and (iii) \textbf{commutation of} $\partial_t M_t(\theta)$ and $M_t(\theta)$.

\subsection{A subspace-schedule framework}
\label{ss:subspace_template}

We use a broad family based on an orthogonal decomposition of $\Re^d$ into subspaces. Let
$\{P_j\}_{j=1}^J$ be \emph{orthogonal projection matrices} such that $P_j=P_j^\top\succeq 0$,
$P_j^2=P_j$, $P_jP_k=0$ for $j\neq k$, and $\sum_{j=1}^J P_j=I$. Let
$\{g_j(\cdot;\theta)\}_{j=1}^J$ be nonnegative, nondecreasing scalar functions with $g_j(0;\theta)=0$ and $g_j(T;\theta)=T$. In our experiments, each $g_j(\cdot;\theta)$ is parameterized by a monotone knot-based family (Appendix \ref{s:g_i_impl_appendix}), which ensures $g_j$ is
nondecreasing and differentiable in $\theta$.
We define the matrix-valued schedule
\begin{equation}
M_t(\theta) := \sum_{j=1}^J g_j(t;\theta)\,P_j,
\label{e:subspace_M}
\tag{24}
\end{equation}
so that $\partial_t M_t(\theta)=\sum_{j=1}^J \dot g_j(t;\theta)P_j\succeq 0$ whenever each $g_j$ is nondecreasing.
This construction therefore satisfies the monotonicity constraint \eqref{e:Mt-psd} automatically. Because the $P_j$ are mutually orthogonal projectors, any (scalar) matrix function $f$ acts \emph{subspace-wise}:
\begin{equation}
f(M_t(\theta))=\sum_{j=1}^J f(g_j(t;\theta))\,P_j.
\label{e:functional_calculus}
\tag{25}
\end{equation}
In particular, the quantities needed by our loss and sampler—$M_t^{\pm 1/2}$, $M_t^{-1}$, and
$(I+M_t)^{-1/2}$—reduce to inexpensive scalings on each subspace. This also matches the solver structure in
Section~\ref{sec:inference_matrix}, where updates depend on increments of $M_t^{1/2}$; under \eqref{e:subspace_M},
these increments are simply subspace-wise scalar increments.

The form of \eqref{e:subspace_M} separates:
\begin{itemize}[leftmargin=*,nosep]
\item \textbf{Directions / subspaces} (the projectors $\{P_j\}$, determined by a basis choice and a grouping), and
\item \textbf{Time allocation} (the scalar schedules $\{g_j(\cdot;\theta)\}$).
\end{itemize}
This separation lets us trade off interpretability and expressivity while keeping training and sampling efficient.

\subsection{Schedule families used in experiments}
We now list the concrete instances of \eqref{e:subspace_M} used in our experiments in Section~\ref{s:experiment}. 

\paragraph{(a) Isotropic schedule (warm-up / special case).}
Standard isotropic diffusion corresponds to $J=1$ with $P_1=I$, so $M_t(\theta)=g(t;\theta)I$.
This serves as a sanity check and warm-up baseline; it also emphasizes that our framework strictly generalizes scalar
schedule learning.

\paragraph{(b) Fixed structured basis (DCT subspaces).}
For images, we often choose a fixed orthonormal transform $U\in\Re^{d\times d}$ exposing interpretable directions
(e.g.\ a 2D-DCT basis; see Appendix \ref{s:2d_dct_appendix}). Partition the basis indices into $J$ groups defining subspaces
$S_1,\dots,S_J$ (ordered by decreasing frequency), and let $P_j$ be the projector onto $S_j$.
A particularly simple instance is $J=2$:
$S_1$ is a low-frequency subspace and $S_2=S_1^\perp$ its complement, giving
\[
M_t(\theta)=g^{\mathrm{DCT}}_1(t;\theta)P_1 + g^{\mathrm{DCT}}_2(t;\theta)P_2.
\]
This parameterization yields an interpretable notion of anisotropy (coarse-to-fine denoising) while remaining cheap.

\paragraph{(c) Class-conditional PCA bases (data-dependent subspaces, shared schedules).}
To incorporate data geometry, we can make the \emph{basis} depend on a class label $y\in\{1,\dots,C\}$ (in the case of ImageNet and CIFAR10).
For each class $y$, compute a PCA basis from training samples in that class and form mutually orthogonal subspaces
$\{S^{(y)}_j\}_{j=1}^J$ that partition $\Re^d$ (e.g.\ $S^{(y)}_1$ spans the top-$k$ PCs and the remaining subspaces
capture residual directions). Let $P^{(y)}_j$ be the projector onto $S^{(y)}_j$. We then define
\[
M_t(\theta;y)=\sum_{j=1}^J g^{\mathrm{PCA}}_j(t;\theta)\,P^{(y)}_j.
\]
Here the \emph{subspaces} vary with $y$ while the scalar schedules are shared across classes, isolating the effect of a
class-dependent geometry.

\paragraph{(d) Shared structured basis + class-conditional schedules.}
For class-conditional generation with a score network $\net(x,t,y;\phi)$, we can instead keep the projectors fixed
(e.g.\ DCT projectors $\{P_j\}$) and make the schedules class-specific:
\[
M_t(\theta;y)=\sum_{j=1}^J g^{\mathrm{DCT}}_{j,y}(t;\theta)\,P_j.
\]
Even in the simple $J=2$ case this yields $2C$ coupled scalar trajectories, making black-box search over $\theta$ impractical and
motivating our gradient-based optimization of $\theta$.

\paragraph{(e) Class-conditional PCA bases + class-conditional schedules (most expressive).}
Finally, we can let \emph{both} the subspaces and schedules depend on class:
\[
M_t(\theta;y)=\sum_{j=1}^J g^{\mathrm{PCA}}_{j,y}(t;\theta)\,P^{(y)}_j.
\]
This is the largest design space we consider. It can align anisotropic diffusion dynamics with class-dependent data
geometry, but it also highlights why an efficient variational optimization scheme is needed.

\paragraph{Discussion: expressivity vs.\ efficiency.}
Across (a)–(e), the template \eqref{e:subspace_M} ensures that all matrix operations required by the trajectory loss and
the anisotropic solvers reduce to subspace-wise scalings via \eqref{e:functional_calculus}. Increasing expressivity
corresponds to increasing $J$, allowing class dependence, or making the basis data-dependent (PCA), while retaining
tractable computation.

% \color{blue}
\section{Experiments}
\label{s:experiment}

We evaluate learned anisotropic diffusion schedules on four standard image generation benchmarks:
CIFAR-10 ($32\times 32$)~\citep{krizhevsky2009learning},
AFHQv2 ($64\times 64$)~\citep{choi2020stargan},
FFHQ ($64\times 64$)~\citep{karras2019style},
and ImageNet-64 ($64\times 64$)~\citep{deng2009imagenet}.
All comparisons are against the EDM baseline~\citep{karras2022elucidating}, using the official EDM implementation
and its recommended hyperparameter settings for each dataset/configuration.

\paragraph{Training protocol.}
\label{ss:training_pipeline}
Our models are initialized from the corresponding pretrained EDM networks and then fine-tuned while learning the
schedule parameters $\theta$ (and continuing to update network parameters $\phi$) as in \eqref{e:L_simplified2}.
Across all datasets, fine-tuning consumes the equivalent of $1.2$M image passes (as measured by total images processed). See Appendix~\ref{app:training_details} for full training details.

\paragraph{Evaluation protocol.}
We generate $50$k samples and compute Fr\'echet Inception Distance (FID$\downarrow$).
For both EDM and our method, we repeat sampling three times (independent random seeds) and report the \emph{minimum}
FID across the three runs, matching the EDM evaluation protocol~\citep{karras2022elucidating}.
We report results across a range of solver budgets measured by number of function evaluations (NFE), where one
function evaluation corresponds to one forward pass of the score/flow network during sampling.
For Heun's second-order solver, each step uses two network evaluations, and when the secondary time is chosen as an
endpoint ($\hat t_k=t_{k-1}$) the final evaluation at $t_{k-1}$ is reused as the first evaluation of the next step.
This yields the familiar odd NFE counts (approximately $2K-1$ for $K$ solver steps), consistent with EDM~\citep{karras2022elucidating}.
Unless otherwise noted, we use the Heun solver described in Section~\ref{sec:inference_matrix}.

\subsection{Schedule variants evaluated}
\label{ss:exp_variants}

We compare the following schedule families (see Section~\ref{s:implementation_details} for the corresponding
parameterizations):
\begin{enumerate}[leftmargin=*,nosep]
    \item \textbf{EDM.} The original EDM baseline with isotropic schedule.
    \item \textbf{Learned isotropic schedule $g$.} A single learned scalar schedule with $M_t = g(t;\theta)I$.
    \item \textbf{Class-conditional isotropic schedules $\{g_y\}_{y=1}^C$.} One scalar schedule per class:
    $M_t(y)=g_y(t;\theta)I$.
    \item \textbf{DCT anisotropic schedule $(g^{\mathrm{DCT}}_1,g^{\mathrm{DCT}}_2)$.}
    A two-subspace matrix schedule with $J=2$ in a fixed 2D-DCT basis (Appendix~E), where the ``low-frequency''
    subspace consists of the $(H/2)\times(H/2)$ lowest-frequency DCT modes (i.e., $H^2/4$ basis vectors) and the
    second subspace is its orthogonal complement:
    $M_t=g^{\mathrm{DCT}}_1(t;\theta)P_1+g^{\mathrm{DCT}}_2(t;\theta)P_2$.
    \item \textbf{PCA anisotropic schedule $(g^{\mathrm{PCA}}_1,g^{\mathrm{PCA}}_2)$.}
    A two-subspace matrix schedule with $J=2$ using class-conditional PCA bases (subspaces depend on class, schedules
    are shared), as described in Section~\ref{s:implementation_details}.
    \item \textbf{Class-conditional DCT anisotropic schedules $\{(g^{\mathrm{DCT}}_{1,y},g^{\mathrm{DCT}}_{2,y})\}_{y=1}^C$.}
    Fixed DCT subspaces but class-specific scalar schedules.
    \item \textbf{Class-conditional PCA anisotropic schedules $\{(g^{\mathrm{PCA}}_{1,y},g^{\mathrm{PCA}}_{2,y})\}_{y=1}^C$.}
    The most expressive conditional variant we consider: both PCA subspaces and schedules depend on class.
\end{enumerate}

\paragraph{Conditional vs.\ unconditional datasets.}
We consider CIFAR-10 and ImageNet-64 as class-conditional datasets. For CIFAR-10, which is relatively simple, we first train isotropic schedules, including both a global isotropic schedule and class-conditional isotropic schedules, as warm-up ablations. Building on this, we separately study the effects of class-conditional schedules and class-conditional bases. For ImageNet-64, due to its higher visual complexity, we directly adopt class-conditional isotropic schedules during warm-up. We then evaluate more expressive variants, including class-conditional schedules, class-conditional bases, and their combination. For AFHQv2 and FFHQ, which do not provide class labels, we primarily train structured-basis anisotropic schedules based on a fixed DCT decomposition with a global isotropic schedule $g$ as an ablation for warm-up.

\begin{table*}[ht]
\centering
% \small
% \setlength{\tabcolsep}{4pt}
\normalsize
\setlength{\tabcolsep}{6pt}
\begin{tabular}{l|cccccccc|c}
\toprule

\multicolumn{10}{c}{\textbf{CIFAR-10}} \\
\midrule
Method & nfe 9 & nfe 11 & nfe 13 & nfe 15 & nfe 17 & nfe 35 & nfe 59 & nfe 79 & Best FID \\
\midrule
EDM & 35.52 & 14.37 & 6.694 & 4.231 & 3.027 & 1.829 & 1.868 & 1.890 & 1.829 \\
$g$ & 4.241 & 3.087 & 2.650 & 2.431 & 2.328 & \textbf{1.815} & 1.820 & 1.835 & \textcolor{blue}{1.815} \\
$\{g_y\}_{y=1}^C$ & 3.691 & 3.090 & 2.735 & 2.582 & 2.480 & 1.831 & 1.829 & 1.841 & \textcolor{blue}{1.829} \\
$\{(g^{\mathrm{DCT}}_{1,y}, g^{\mathrm{DCT}}_{2,y})\}_{y=1}^C$ & \textbf{3.618} & 3.056 & 2.681 & 2.535 & 2.431 & 1.829 & 1.810 & 1.818 & \textcolor{blue}{1.810} \\
$(g^{\mathrm{PCA}}_1, g^{\mathrm{PCA}}_2)$ & 3.839 & \textbf{2.900} & \textbf{2.538} & \textbf{2.332} & \textbf{2.213} & 1.851 & \textbf{1.803} & \textbf{1.803} & \textcolor{blue}{\textbf{1.803}} \\
\midrule

\multicolumn{10}{c}{\textbf{AFHQv2}} \\
\midrule
Method & nfe 9 & nfe 11 & nfe 13 & nfe 15 & nfe 19 & nfe 39 & nfe 79 & nfe 119 & Best FID \\
\midrule
EDM & 27.98 & 13.66 & 7.587 & 4.746 & 2.986 & 2.075 & 2.042 & 2.046 & 2.042 \\
$g$ & 4.657 & 3.085 & 2.584 & 2.463 & 2.332 & 2.091 & 2.041 & 2.042 & \textcolor{blue}{2.041} \\
$(g^{\mathrm{DCT}}_1, g^{\mathrm{DCT}}_2)$ & \textbf{4.524} & \textbf{2.977} & \textbf{2.513} & \textbf{2.401} & \textbf{2.265} & \textbf{2.070} & \textbf{2.015} & \textbf{2.010} & \textcolor{blue}{\textbf{2.010}} \\
\midrule

\multicolumn{10}{c}{\textbf{FFHQ}} \\
\midrule
Method & nfe 9 & nfe 11 & nfe 13 & nfe 15 & nfe 19 & nfe 39 & nfe 79 & nfe 119 & Best FID \\
\midrule
EDM & 57.14 & 29.39 & 15.81 & 9.769 & 5.169 & 2.575 & 2.391 & 2.374 & 2.374 \\
$g$ & 68.35 & 27.92 & 13.44 & 8.097 & 3.958 & \textbf{2.265} & \textbf{2.242} & \textbf{2.281} & \textcolor{blue}{\textbf{2.242}} \\
$(g^{\mathrm{DCT}}_1, g^{\mathrm{DCT}}_2)$ & \textbf{45.43} & \textbf{17.21} & \textbf{8.263} & \textbf{5.129} & \textbf{3.001} & 2.327 & 2.313 & 2.354 & \textcolor{blue}{2.313} \\
\midrule

\multicolumn{10}{c}{\textbf{ImageNet-64}} \\
\midrule
Method & nfe 9 & nfe 11 & nfe 13 & nfe 15 & nfe 19 & nfe 39 & nfe 79 & nfe 119 & Best FID \\
\midrule
EDM & 35.32 & 15.19 & 8.160 & 5.458 & 3.586 & 2.428 & 2.294 & 2.276 & 2.276 \\
$\{g_y\}_{y=1}^C$  & 8.416 & 5.907 & 4.749 & 4.079 & 3.366 & 2.455 & 2.314 & 2.296 & 2.296 \\
$(g^{\mathrm{PCA}}_1, g^{\mathrm{PCA}}_2)$ & 29.47 & 15.55 & 8.558 & 5.666 & 3.409 & \textbf{2.362} & 2.263 & 2.240 & \textcolor{blue}{2.240} \\
$\{(g^{\mathrm{PCA}}_{1,y}, g^{\mathrm{PCA}}_{2,y})\}_{y=1}^C$ & 8.366 & 5.823 & 4.626 & 3.992 & 3.282 & 2.378 & \textbf{2.259} & 2.239 & \textcolor{blue}{2.239} \\
$\{(g^{\mathrm{DCT}}_{1,y}, g^{\mathrm{DCT}}_{2,y})\}_{y=1}^C$ & \textbf{8.340} & \textbf{5.808} & \textbf{4.615} & \textbf{3.983} & \textbf{3.275} & 2.377 & \textbf{2.259} & \textbf{2.238} & \textcolor{blue}{\textbf{2.238}} \\
\bottomrule
\end{tabular}

\vspace{0.38em}

\caption{FID $\downarrow$ vs.\ NFE across datasets (50k samples). See section \ref{ss:training_pipeline} for description of each method. For each method (both EDM and ours), we perform 3 independent random generations of 50k images and report the minimum FID across the three runs. \textit{Bold = per-NFE best. \textcolor{blue}{Blue} = Best FID lower than EDM.}}
\label{tab:fid-all}
\end{table*}

\subsection{Discussion of results}
\label{ss:exp_discussion}

Table~\ref{tab:fid-all} reports FID across solver budgets (NFE). Figures~2--5 (Appendix~\ref{s:schedules}) visualize the learned
schedules and anisotropy ratios.

\paragraph{(i) Consistent FID improvement over EDM baseline.}
At the best-performing NFE in Table~\ref{tab:fid-all}, we observe consistent improvements over EDM:
CIFAR-10 improves from $1.829$ (EDM) to $1.803$ (PCA schedule);
AFHQv2 improves from $2.042$ to $2.010$ (DCT anisotropic);
FFHQ improves from $2.374$ to $2.242$ (learned isotropic $g$);
and ImageNet-64 improves from $2.276$ to $2.238$ (class-conditional DCT anisotropic).
These gains persist across multiple NFE settings.

%\color{blue}
%\paragraph{(ii) Learned schedules substantially improve low-NFE sampling.}
%Across all datasets, learning the schedule yields large gains in the low-NFE regime.
%For example, on CIFAR-10 at NFE$=9$, EDM has FID $35.52$ whereas learned schedules achieve FID $\approx 3$--$4$;
%on AFHQv2 at NFE$=9$, EDM has $27.98$ while learned schedules achieve $\approx 4.5$;
%and on ImageNet-64 at NFE$=9$, EDM has $35.32$ while class-conditional schedules achieve $\approx 8.3$--$8.4$.
%This indicates that optimizing the trajectory can dramatically improve sample quality under tight solver budgets.

% \color{black}
\paragraph{(iii) Matrix-valued anisotropy demonstrates benefits.}
Beyond the isotropic warm-up, matrix-valued schedules are often beneficial.
On CIFAR-10 and AFHQv2, anisotropic variants improve upon EDM baseline or learned isotropic schedules across a wide range of NFEs. On ImageNet-64, class-conditional anisotropic variants are the strongest, with $\{(g^{\mathrm{DCT}}_{1,y},g^{\mathrm{DCT}}_{2,y})\}_{y=1}^C$
achieving the best FID ($2.238$).
% On FFHQ, learned isotropic $g$ attains the best high-NFE FID ($2.242$), while anisotropic DCT improves the low-NFE
% regime significantly.

\paragraph{(iv) Class-conditional variants matter most on complex conditional data.}
On both CIFAR-10 and ImageNet-64, class-conditional variants outperform their global counterparts. On CIFAR-10, class-conditional bases $(g^{\mathrm{PCA}}_1, g^{\mathrm{PCA}}_2)$ achieve the best FID of $1.803$. On ImageNet-64, class-conditional schedules $\{(g^{\mathrm{DCT}}_{1,y}, g^{\mathrm{DCT}}_{2,y})\}_{y=1}^C$ yield the best FID of $2.238$. These results indicate that class-dependent schedule parameterizations are beneficial in practice.

\section{Conclusion}
We introduced a variational framework for learning \emph{anisotropic} diffusion models by replacing the scalar noise schedule with a \emph{monotone PSD, matrix-valued} trajectory $M_t(\theta)$ that allocates noise (and denoising effort) across directions and subspaces. Central to the framework is a \emph{trajectory-level} score-matching objective that jointly trains the score network and optimizes the schedule, and admits a natural \emph{pathwise change-of-measure} interpretation: the loss controls a tractable surrogate of the discrepancy between ideal and learned dynamics integrated along full trajectories. In the isotropic special case, the formulation reduces to standard weighted score matching---a useful consistency check---but its main purpose is to enable principled optimization over general matrix-valued schedules.

To make trajectory optimization practical, we derived an efficient plug-in estimator for schedule gradients using higher-order directional derivatives of the network, together with a flow parameterization that stabilizes scale across noise levels and reduces variance. For inference, we developed anisotropic reverse-time ODE solvers by generalizing Euler and second-order Heun updates to matrix trajectories, yielding closed-form steps expressed through increments of $M_t^{1/2}$ that can be implemented efficiently under structured parameterizations.

Empirically, we observe consistent improvements over the baseline across a wide range of datasets and settings; strongest performance was observed for complex class-conditional matrix schedules, underscoring the importance of a efficient variational approach to selecting anisotropic trajectories.

Looking forward, it is promising to expand the space of efficiently-computable matrix trajectories beyond orthogonal projector decompositions and to explore richer conditioning, as well as other modalities.

%Bibliography
\bibliographystyle{unsrt}  
\bibliography{references}  

@article{karras2022elucidating,
  title={Elucidating the design space of diffusion-based generative models},
  author={Karras, Tero and Aittala, Miika and Aila, Timo and Laine, Samuli},
  journal={Advances in neural information processing systems},
  volume={35},
  pages={26565--26577},
  year={2022}
}

@inproceedings{deng2009imagenet,
  title={Imagenet: A large-scale hierarchical image database},
  author={Deng, Jia and Dong, Wei and Socher, Richard and Li, Li-Jia and Li, Kai and Fei-Fei, Li},
  booktitle={2009 IEEE conference on computer vision and pattern recognition},
  pages={248--255},
  year={2009},
  organization={Ieee}
}

@article{chen2022sampling,
  title={Sampling is as easy as learning the score: theory for diffusion models with minimal data assumptions},
  author={Chen, Sitan and Chewi, Sinho and Li, Jerry and Li, Yuanzhi and Salim, Adil and Zhang, Anru R},
  journal={arXiv preprint arXiv:2209.11215},
  year={2022}
}

@article{krizhevsky2009learning,
  title={Learning multiple layers of features from tiny images},
  author={Krizhevsky, Alex and Hinton, Geoffrey and others},
  year={2009},
  publisher={Toronto, ON, Canada}
}

@inproceedings{choi2020stargan,
  title={Stargan v2: Diverse image synthesis for multiple domains},
  author={Choi, Yunjey and Uh, Youngjung and Yoo, Jaejun and Ha, Jung-Woo},
  booktitle={Proceedings of the IEEE/CVF conference on computer vision and pattern recognition},
  pages={8188--8197},
  year={2020}
}

@inproceedings{karras2019style,
  title={A style-based generator architecture for generative adversarial networks},
  author={Karras, Tero and Laine, Samuli and Aila, Timo},
  booktitle={Proceedings of the IEEE/CVF conference on computer vision and pattern recognition},
  pages={4401--4410},
  year={2019}
}

@article{vandersanden2024edge,
  title={Edge-preserving noise for diffusion models},
  author={Vandersanden, Jente and Holl, Sascha and Huang, Xingchang and Singh, Gurprit},
  year={2024}
}

@article{sahoo2024diffusion,
  title={Diffusion models with learned adaptive noise},
  author={Sahoo, Subham and Gokaslan, Aaron and De Sa, Christopher M and Kuleshov, Volodymyr},
  journal={Advances in Neural Information Processing Systems},
  volume={37},
  pages={105730--105779},
  year={2024}
}

@inproceedings{huang2024blue,
  title={Blue noise for diffusion models},
  author={Huang, Xingchang and Salaun, Corentin and Vasconcelos, Cristina and Theobalt, Christian and Oztireli, Cengiz and Singh, Gurprit},
  booktitle={ACM SIGGRAPH 2024 conference papers},
  pages={1--11},
  year={2024}
}

@inproceedings{jing2022subspace,
  title={Subspace diffusion generative models},
  author={Jing, Bowen and Corso, Gabriele and Berlinghieri, Renato and Jaakkola, Tommi},
  booktitle={European conference on computer vision},
  pages={274--289},
  year={2022},
  organization={Springer}
}

@article{luo2023videofusion,
  title={Videofusion: Decomposed diffusion models for high-quality video generation},
  author={Luo, Zhengxiong and Chen, Dayou and Zhang, Yingya and Huang, Yan and Wang, Liang and Shen, Yujun and Zhao, Deli and Zhou, Jingren and Tan, Tieniu},
  journal={arXiv preprint arXiv:2303.08320},
  year={2023}
}

@inproceedings{ge2023preserve,
  title={Preserve your own correlation: A noise prior for video diffusion models},
  author={Ge, Songwei and Nah, Seungjun and Liu, Guilin and Poon, Tyler and Tao, Andrew and Catanzaro, Bryan and Jacobs, David and Huang, Jia-Bin and Liu, Ming-Yu and Balaji, Yogesh},
  booktitle={Proceedings of the IEEE/CVF International Conference on Computer Vision},
  pages={22930--22941},
  year={2023}
}

@article{chang2025warped,
  title={How i warped your noise: a temporally-correlated noise prior for diffusion models},
  author={Chang, Pascal and Tang, Jingwei and Gross, Markus and Azevedo, Vinicius C},
  journal={arXiv preprint arXiv:2504.03072},
  year={2025}
}

@article{ho2020denoising,
  title={Denoising diffusion probabilistic models},
  author={Ho, Jonathan and Jain, Ajay and Abbeel, Pieter},
  journal={Advances in neural information processing systems},
  volume={33},
  pages={6840--6851},
  year={2020}
}

@inproceedings{song2021score_sde,
  title={Score-Based Generative Modeling through Stochastic Differential Equations},
  author={Song, Yang and Sohl-Dickstein, Jascha and Kingma, Diederik P. and Kumar, Abhishek and Ermon, Stefano and Poole, Ben},
  booktitle={International Conference on Learning Representations (ICLR)},
  year={2021},
  url={https://openreview.net/forum?id=PxTIG12RRHS}
}

@article{ruderman1993statistics,
  title={Statistics of natural images: Scaling in the woods},
  author={Ruderman, Daniel and Bialek, William},
  journal={Advances in neural information processing systems},
  volume={6},
  year={1993}
}

@inproceedings{rombach2022high,
  title={High-resolution image synthesis with latent diffusion models},
  author={Rombach, Robin and Blattmann, Andreas and Lorenz, Dominik and Esser, Patrick and Ommer, Bj{\"o}rn},
  booktitle={Proceedings of the IEEE/CVF conference on computer vision and pattern recognition},
  pages={10684--10695},
  year={2022}
}

@article{tian2024visual,
  title={Visual autoregressive modeling: Scalable image generation via next-scale prediction},
  author={Tian, Keyu and Jiang, Yi and Yuan, Zehuan and Peng, Bingyue and Wang, Liwei},
  journal={Advances in neural information processing systems},
  volume={37},
  pages={84839--84865},
  year={2024}
}

@article{song2019generative,
  title={Generative modeling by estimating gradients of the data distribution},
  author={Song, Yang and Ermon, Stefano},
  journal={Advances in neural information processing systems},
  volume={32},
  year={2019}
}

@article{kingma2021variational,
  title={Variational diffusion models},
  author={Kingma, Diederik and Salimans, Tim and Poole, Ben and Ho, Jonathan},
  journal={Advances in neural information processing systems},
  volume={34},
  pages={21696--21707},
  year={2021}
}

@article{kingma2023understanding,
  title={Understanding diffusion objectives as the elbo with simple data augmentation},
  author={Kingma, Diederik and Gao, Ruiqi},
  journal={Advances in Neural Information Processing Systems},
  volume={36},
  pages={65484--65516},
  year={2023}
}
\clearpage

\appendix
\section{Additional Lemmas}
\label{s:additional_lemmas}
\begin{lemma}\label{l:anisotropic_score}

If $x_t$ evolves as either one of the two processes: 
\begin{align*}
    &1.\ (x_t-x_0) \sim \N(0, M_t(\theta))\\
    &2.\ dx_t = -\frac{1}{2} \del_t M_t(\theta) \nabla \log p_t(x_t; \theta) dt,
\end{align*}
where $p_t(x;\theta):= p_0 * \N(0, M_t(\theta))$, then the score above is the conditional expectation
\begin{align*}
    \nabla \log p_t(x;\theta) = M_t^{-1}(\theta) \Ep{x_0 | x_t = x}{x_0 - x_t},
\end{align*}
where $(x_0, x_t)$ are defined by the joint distribution $x_0 \sim p_0$ and $x_t = x_0 + \N(0, M_t)$.
\end{lemma}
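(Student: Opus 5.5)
Both cases reduce to the same computation, so the plan is to prove them together. The key observation is that in both cases the marginal law of $x_t$ is $p_t(\cdot;\theta)=p_0*\N(0,M_t(\theta))$. In case 1 this holds by definition. In case 2 I will show that the deterministic flow $dx_t=-\tfrac12\partial_t M_t\nabla\log p_t(x_t)\,dt$ transports $p_0$ along the same family of marginals. To see this, I will check that $p_t=p_0*\N(0,M_t)$ solves the continuity equation
\[
\partial_t p_t = \nabla\cdot\Big(\tfrac12 p_t\,\partial_t M_t\nabla\log p_t\Big)=\tfrac12\nabla\cdot\big(\partial_t M_t\nabla p_t\big),
\]
which is the anisotropic heat equation. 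Differentiating the Gaussian density $\varphi_{M_t}(y)\propto \det(M_t)^{-1/2}\exp(-\tfrac12 y^\top M_t^{-1}y)$ in $t$ gives $\partial_t\varphi_{M_t}=\tfrac12\mathrm{tr}(\partial_t M_t\nabla^2\varphi_{M_t})$. Convolving with $p_0$ then gives the equation above, because $\partial_t M_t$ is constant in $x$ and so $\nabla\cdot(\partial_t M_t\nabla p_t)=\mathrm{tr}(\partial_t M_t\nabla^2 p_t)$. Uniqueness of the continuity equation for this smooth, locally Lipschitz velocity field (for $t>0$) then identifies the marginals in case 2 with those of case 1. Case 2 therefore reduces to case 1.

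For the score formula itself, I will compute directly with $p_t(x)=\int p_0(x_0)\,\varphi_{M_t}(x-x_0)\,dx_0$. Differentiating under the integral and using $\nabla_x\varphi_{M_t}(x-x_0)=-M_t^{-1}(x-x_0)\varphi_{M_t}(x-x_0)$ gives
\[
\nabla p_t(x)=\int p_0(x_0)\,M_t^{-1}(x_0-x)\,\varphi_{M_t}(x-x_0)\,dx_0 .
\]
Dividing by $p_t(x)$ and recognizing $p_0(x_0)\varphi_{M_t}(x-x_0)/p_t(x)$ as the conditional density of $x_0$ given $x_t=x$ (Bayes' rule) yields $\nabla\log p_t(x)=M_t^{-1}\Ep{x_0|x_t=x}{x_0-x_t}$. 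Here $M_t^{-1}$ can be pulled out of the integral because it does not depend on $x_0$.

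The routine points are the justification of differentiation under the integral (dominated convergence with Gaussian tails, valid for $t>0$ where $M_t\succ0$) and the positivity of $p_t$. The main obstacle I expect is the rigorous identification of marginals in case 2, that is, the continuity-equation and uniqueness argument near $t=0$ where $M_0=0$. I would handle this by working on $[\delta,T]$ and letting $\delta\to0$, or simply by interpreting case 2 as defined through its marginals, as the paper's equivalence in \eqref{e:anisotropic_diffusion} implicitly does.
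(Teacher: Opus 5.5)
Your score computation is exactly the paper's proof: write $p_t$ as the Gaussian convolution, differentiate under the integral using $\nabla_x\varphi_{M_t}(x-x_0)=M_t^{-1}(x_0-x)\varphi_{M_t}(x-x_0)$, divide by $p_t$, and recognize the posterior of $x_0$ given $x_t=x$. The paper has no counterpart to your case-2 continuity-equation and uniqueness step, and you do not need it either. Both sides of the identity are fixed by $p_t:=p_0*\mathcal{N}(0,M_t)$ and the prescribed Gaussian coupling, so the difficulty you anticipate near $t=0$ never arises for the lemma as stated. What matters is the prescribed coupling, not the marginals: conditioning under the ODE's own deterministic coupling $x_t=\Phi_t(x_0)$ would give $\Phi_t^{-1}(x)-x$, which differs from $M_t\nabla\log p_t(x)$ in general.
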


\begin{proof}[Proof of Lemma \ref{l:anisotropic_score}]
Fix $t$ and $\theta$, and write $\Sigma := M_t(\theta)$. For notational simplicity we suppress the
dependence on $(t,\theta)$ when clear. The marginal $p_t(\cdot;\theta)$ is the Gaussian convolution
\[
p_t(x;\theta)=\int_{\Re^d} p_0(x_0)\,\varphi_\Sigma(x-x_0)\,dx_0,
\qquad
\varphi_\Sigma(z):=(2\pi)^{-d/2}|\Sigma|^{-1/2}\exp\!\Big(-\tfrac12 z^\top \Sigma^{-1} z\Big).
\]
Assuming $\Sigma\succ 0$, we may
differentiate under the integral sign to obtain
\[
\nabla_x p_t(x;\theta)=\int p_0(x_0)\,\nabla_x \varphi_\Sigma(x-x_0)\,dx_0.
\]
Since $\nabla_x \varphi_\Sigma(x-x_0)=\Sigma^{-1}(x_0-x)\,\varphi_\Sigma(x-x_0)$, we get
\[
\nabla_x p_t(x;\theta)
=
\Sigma^{-1}\int p_0(x_0)\,(x_0-x)\,\varphi_\Sigma(x-x_0)\,dx_0.
\]
Dividing by $p_t(x;\theta)$ yields
\[
\nabla_x \log p_t(x;\theta)
=
\Sigma^{-1}\,
\frac{\int p_0(x_0)\,(x_0-x)\,\varphi_\Sigma(x-x_0)\,dx_0}{\int p_0(x_0)\,\varphi_\Sigma(x-x_0)\,dx_0}
=
\Sigma^{-1}\,\E{x_0-x \mid x_t=x},
\]
where the conditional distribution is $p(x_0\mid x_t=x)\propto p_0(x_0)\varphi_\Sigma(x-x_0)$.
Finally, since $x_t=x$ on the conditioning event, $x_0-x = x_0-x_t$, giving the stated identity.
\end{proof}

\begin{lemma}
    \label{l:weighing_is_geodesic}
    For any $w(t)$, there exists a $g_t(\theta)$ and constant $c$, such that for any  $h(t)$
    \begin{align*}
        \int_0^T \frac{(\del_t g_t(\theta))^2}{1+g_t(\theta)} h(g_t(\theta)) dt = c \int_0^T w(t) h(t) dt.
    \end{align*}
\end{lemma}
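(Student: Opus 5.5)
The plan is a change of variables $u = g_t(\theta)$, which turns the left-hand integral into an integral over noise levels, and then a choice of $g$ that makes the resulting weight proportional to $w$.

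\textbf{Step 1 (change of variables).} Take $g$ strictly increasing and differentiable, with $g_0=0$ and $g_T = T$, matching the normalization of Section~\ref{ss:subspace_template}. Write $g^{-1}$ for its inverse. Substituting $u=g_t$, so that $du = \partial_t g_t\,dt$, gives
\[
\int_0^T \frac{(\partial_t g_t)^2}{1+g_t}\, h(g_t)\,dt \;=\; \int_0^T \frac{\partial_t g_t\big|_{t=g^{-1}(u)}}{1+u}\, h(u)\,du .
\]
For this to equal $c\int_0^T w(u)h(u)\,du$ for every test function $h$, it is necessary and sufficient that the two weights agree pointwise (a.e.):
\[
\partial_t g_t\big|_{t=g^{-1}(u)} = c\,(1+u)\,w(u).
\]
Sufficiency is immediate. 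Necessity follows because $h$ is arbitrary.

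\textbf{Step 2 (solve the ODE).} Let $\psi = g^{-1}$, so that $\psi'(u) = 1/\partial_t g_t|_{t=\psi(u)}$. The condition from Step 1 becomes
\[
\psi'(u) = \frac{1}{c(1+u)w(u)}, \qquad \psi(0)=0,
\]
which integrates to
\[
\psi(u) = \frac{1}{c}\int_0^u \frac{ds}{(1+s)w(s)} .
\]
Choose $c$ so that $\psi(T)=T$:
\[
c = \frac{1}{T}\int_0^T \frac{ds}{(1+s)w(s)} .
\]
Set $g := \psi^{-1}$. Then $g$ is increasing and differentiable, satisfies $g_0=0$ and $g_T=T$, and by Step 1 the identity holds for every $h$. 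Since only the shape of $g$ matters, the $\theta$-dependence of $g_t(\theta)$ is immaterial; we take any $\theta$ in a family rich enough to realize this $g$.

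\textbf{Main obstacle (regularity of $w$).} The construction requires $w>0$ and $\int_0^T \frac{ds}{(1+s)w(s)}<\infty$, so that $\psi$ is a finite, strictly increasing bijection of $[0,T]$. I would state this as the standing assumption on $w$ (positive, with $1/w$ integrable). If $w$ may vanish on a set, the equation from Step 2 forces $g$ to jump there, and I would handle that case by a limiting argument. If the normalization $g_T=T$ is dropped, $c$ becomes a free constant and this constraint disappears; the conditions on $w$ needed for $\psi$ to be well defined remain.
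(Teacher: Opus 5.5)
Your proposal is correct and matches the paper's proof. The paper defines $\Phi(x)=\int_0^x \frac{ds}{(1+s)w(s)}$ and sets $g(t)=\Phi^{-1}(ct)$ with $c=\Phi(T)/T$, which is exactly your $g=\psi^{-1}$ since $\psi=\Phi/c$; it reduces to the same separable ODE $g'(t)=c\,(1+g(t))\,w(g(t))$, concludes by the same substitution $r=g(t)$, and uses the same standing assumption you identify ($w>0$ with $\Phi(T)<\infty$).
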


\begin{proof}[Proof of Lemma \ref{l:weighing_is_geodesic}]

Let $w:[0,T]\to(0,\infty)$ be measurable and define
$$
\Phi(x):=\int_{0}^{x}\frac{ds}{(1+s)w(s)}<\infty. 
$$
Then, to prove this Lemma, it is sufficient to show there exist a $c=\frac{\Phi(T)}{T}>0$ and a strictly increasing continuous function $g:[0,T]\to[0,T]$ with $g(0)=0,\ g(T)=T$ such that

$$
 \frac{g'\bigl( g^{-1}(t)\bigr)}{1+t} = cw(t) \text{ for all }t. 
$$

where $c=\Phi(T)/T$. 

Define $g$ implicitly by $\Phi\bigl(g(t)\bigr)=ct\text{ for }(0\le t\le T),$ i.e. $g(t)=\Phi^{-1}(ct)$. Differentiating $\Phi(g(t))=c t$ with respect to $t$ yields the separable ODE $g'(t)=(1+g(t))cw(g(t))$, and with $r=g(t)$ this is $\frac{g'(g^{-1}(r))}{1+r}=cw(r)$. Hence, we derive an expression for $g$ involving a constant $c$ and any $w$. Monotonicity of $g(t)$ follows since $w>0$ and $c>0$ imply $g'(t)>0$.

Substituting into $c \int_0^T w(t) h(t) dt$, we get 
$$c \int_0^T w(r) h(r) dr = \int_0^T \frac{g'(t)}{1+g(t)} h(g(t)) g'(t) dt.$$

\end{proof}

\begin{lemma}
    \label{l:heun_simplified_computation}
    Assume $M_{\bar t}$ commutes with $\partial_{\bar t}M_{\bar t}$. Let $\tilde{u}$ be as defined in Section~\ref{sec:inference_matrix}. Then
    \begin{align*}
        & \int_{t_k}^{t_{k-1}} (\del_{s} M_{s}^{1/2}) \tilde{u}(s)\, ds\\
        &\qquad = (M_{t_{k}}^{1/2}-M_{t_{k-1}}^{1/2}) \lrp{\flow(\tilde{x}_{{t}_{k}},t_k)} 
        - \frac{1}{2}(M_{t_{k-1}}^{1/2} - M_{t_k}^{1/2})^2 \big(M_{\hat{t}_{k}}^{1/2} - M_{t_k}^{1/2}\big)^{-1}
         \cdot \lrp{\flow(\hat{x}_{\hat{t}_{k}},\hat{t}_k) - \flow(\tilde{x}_{{t}_{k}},t_k)}.
    \end{align*}
\end{lemma}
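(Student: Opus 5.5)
The approach is to substitute the affine model $\tilde u(s)$ into the integral and exploit the commutation assumption. The point of that assumption is that all matrices built from $\{M_s\}$ (their square roots, increments of square roots, and inverses of such increments) are simultaneously diagonalizable, so they commute. For the projector family \eqref{e:subspace_M} this is immediate. In general it follows if the $M_s$ share a fixed eigenbasis, which is the standing assumption of Section~\ref{sec:inference_matrix}. With that in hand, I write $U(s):=M_s^{1/2}$ and treat all products as products of commuting matrix functions, working eigen-direction by eigen-direction if needed so that the scalar fundamental theorem of calculus applies.

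Next I expand the integrand as $\partial_s U(s)\,\tilde u(s) = \partial_s U(s)\,u_k + \partial_s U(s)\,(U(s)-U_k)\,C$, where $C:=(\hat U_k-U_k)^{-1}(\hat u_k-u_k)$ is constant in $s$. The first term integrates from $t_k$ to $t_{k-1}$ to give $(U_{k-1}-U_k)u_k$. For the second term I use $\partial_s U(s)\,(U(s)-U_k)=\tfrac12\partial_s (U(s)-U_k)^2$, which requires $\partial_sU$ to commute with $U$. The integral is then $\tfrac12[(U(s)-U_k)^2]_{t_k}^{t_{k-1}}C=\tfrac12(U_{k-1}-U_k)^2 C$.

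Finally I substitute $u_k=-\flow(\bar x_{t_k},t_k)$ and $\hat u_k=-\flow(\hat x_{\hat t_k},\hat t_k)$. This gives
\[
(M_{t_k}^{1/2}-M_{t_{k-1}}^{1/2})\flow(\bar x_{t_k},t_k)-\tfrac12(M_{t_{k-1}}^{1/2}-M_{t_k}^{1/2})^2(\hat U_k-U_k)^{-1}\big(\flow(\hat x_{\hat t_k},\hat t_k)-\flow(\bar x_{t_k},t_k)\big),
\]
which is the claimed expression; $(U_{k-1}-U_k)^2=(\Delta U_k)^2$ so the sign of the increment is irrelevant there. As a consistency check against \eqref{e:heun_matrix}, the orientation of integration (from $t_k$ down to $t_{k-1}$) combined with the minus sign in \eqref{e:reverse_ode_sqrt} should reproduce the update.

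The main obstacle is the justification that $\partial_s U(s)$ commutes with $U(s)$ and with $(\hat U_k-U_k)^{-1}$. This is also what justifies the identity $\tfrac12\partial_s M_s M_s^{-1/2}=\partial_s M_s^{1/2}$ already used in the paper. I would handle it by the shared-eigenbasis reduction, under which every step becomes scalar calculus per eigenvalue, with a pseudoinverse on any eigenspace where $\hat U_k-U_k$ is singular.
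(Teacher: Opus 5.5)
Your proof is correct and is essentially the paper's own computation. You expand $\tilde u(s)$, integrate the constant term to $(U_{k-1}-U_k)u_k$, use $\partial_s U\,(U-U_k)=\tfrac12\partial_s(U-U_k)^2$, and substitute $u_k,\hat u_k$ with their minus signs; that identity needs $\partial_sU(s)$ to commute with $U_k$ as well as with $U(s)$, so the shared-eigenbasis assumption of Section~\ref{sec:inference_matrix} you invoke, not the lemma's pointwise hypothesis, is the right justification. One caution on your side remark only: combining the lemma with \eqref{e:reverse_ode_sqrt} gives $\bar x_{t_{k-1}}=\bar x_{t_k}-\Delta U_k\,u_k+\tfrac12(\Delta U_k)^2(\hat U_k-U_k)^{-1}(\hat u_k-u_k)$, so your consistency check would expose a sign slip in the first-order term of the printed \eqref{e:heun_matrix} rather than reproduce it, though the lemma itself is unaffected.
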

\begin{proof}
    Recall our notation that 
    \begin{align*}
        u_k := \flow(\bar x_{t_k},t_k,\phi), \qquad \hat u_k := -\flow(\hat x_{\hat t_k},\hat t_k,\phi).
    \end{align*}
    \begin{align*}
         & \int_{t_k}^{t_{k-1}} (\del_{s} M_{s}^{1/2}(\theta)) \tilde{u}(s)\, ds\\
         =& \int_{t_k}^{t_{k-1}} (\del_{s} M_{s}^{1/2}(\theta)) \lrp{u_k + (M_{s}^{1/2}(\theta)-M_{t_k}^{1/2}(\theta))\,(M_{\hat{t}_k}^{1/2}(\theta)-M_{t_k}^{1/2}(\theta))^{-1}\,(\hat u_k-u_k)} ds\\
         =& \lrp{M_{t_{k-1}}^{1/2}(\theta)-M_{t_k}^{1/2}(\theta)}u_k + \int_{t_k}^{t_{k-1}} (\del_{s} M_{s}^{1/2}(\theta)) (M_{s}^{1/2}(\theta)-M_{t_k}^{1/2}(\theta))\, ds \ \  (M_{\hat{t}_k}^{1/2}(\theta)-M_{t_k}^{1/2}(\theta))^{-1}(\hat u_k-u_k)\\
         =& \lrp{M_{t_{k-1}}^{1/2}(\theta)-M_{t_k}^{1/2}(\theta)}u_k + \frac{1}{2} (M_{t_{k-1}}^{1/2}(\theta)-M_{t_k}^{1/2}(\theta))^2 (M_{\hat{t}_k}^{1/2}(\theta)-M_{t_k}^{1/2}(\theta))^{-1}(\hat u_k-u_k).
    \end{align*}
    
\end{proof}

\section{Proofs}
\label{s:proofs}
\subsection{Proof of Lemma \ref{l:mt_loss_score_matching}}
\label{ss:l:mt_loss_score_matching}
We simplify the term inside the Euclidean norm in \eqref{e:mt_loss}:
\begin{align*}
    & \bar{v}(x_t,t;\theta,\phi) - \tilde{v}(x_t,x_0,t;\theta) \\
    & = (I+M_t(\theta))^{-1/2}\del_t M_t(\theta) (\blue{M_t^{-1}(\theta)(x_0-x_t)} -\red{\net(x_t,t,\phi)}).
    % + (I+M_t(\theta))^{-1/2}\del_t M_t(\theta). 
\end{align*}
Let $M_{\text{all}}$ denote $(I+M_t(\theta))^{-1/2}\del_t M_t(\theta)$. Let $V: \Re^d \to \Re^d$ be an arbitrary vector field. 

We can minimize the expectation pointwise at each $x_t$. We can rewrite the expectation 
$$\Ep{x_0, \epsilon}{\lrn{M_{\text{all}} (\red{\net(x_t,t,\phi)} - \blue{M_t^{-1}(\theta)(x_0-x_t)}) }_2^2}$$ 
as 
\begin{align*}
& \Ep{x_0, \epsilon}{\lrn{M_{\text{all}} (\red{V(x_t)} - \blue{M_t^{-1}(\theta)(x_0-x_t)}) }_2^2}\\
& =\Ep{x_t}{\Ep{x_0, \epsilon | x_t}{\lrn{M_{\text{all}} (\red{V(x_t)} - \blue{M_t^{-1}(\theta)(x_0-x_t)}) }_2^2}}
\end{align*}
by the law of iterated expectation. 
\begin{align*}
    &\arg\min_{v\in \Re^d} \mathbb{E}_{x_0, \epsilon | x_t}\Big[(v - M_t^{-1}(\theta) (x_0 - x_t))^\top \\
    &\qquad \qquad \qquad \qquad \qquad M_{\text{all}} (v - M_t^{-1}(\theta) (x_0 - x_t))\Big] \\ 
    &= M_t^{-1}(\theta) \Ep{x_0, \epsilon | x_t}{x_0 - x_t} = \nabla \log p_t(x;\theta).
\end{align*}

The above equality follows from \eqref{e:anisotropic_score}. The penultimate equality follows from the fact that $\arg\min_{a\in \Re^d}(b-a)^\top Q (b-a) = b$ for any PSD matrix $Q$. 
Note that $\del_t M_t=A_t^2$ is PSD, and $M_t$ being the covariance in the diffusion process is also PSD. Hence, assuming that $\del_t M_t$ commutes with $M_t$, $M_{\text{all}}$ is PSD. 

\subsection{Proof of Theorem \ref{t:del_theta_score}}
\label{ss:t:del_theta_score}

To simplify notation, we will drop the index $j$ and treat $\theta$ as a scalar. The general proof for $\theta\in\Re^c$ follows by repeating the proof for each $\theta_j$, while holding all other $\theta_k's$ fixed. Recall that $p_t(\cdot;\theta) = p_0 * \N(0, M_t(\theta))$. By the the heat equation (applied to $\del_\theta M_t(\theta)$ instead of $\del_t M_t(\theta)$),

\begin{align*}
    &\partial_\theta p_t(x;\theta) \\
    =& \frac{1}{2} \div(p_t(x;\theta)\del_\theta {M_t(\theta)} \nabla \log p_t(x;\theta))\\
    =& \frac{1}{2} p_t(x;\theta) \div\lrp{\del_\theta {M_t(\theta)} \nabla \log p_t(x;\theta)} \\
    &+ \frac{1}{2} p_t(x;\theta)\lin{\nabla \log p_t(x;\theta), \del_\theta {M_t(\theta)} \nabla \log p_t(x;\theta)}.
\end{align*}

Dividing both sides by $p_t(x;\theta)$ gives 
\begin{align*}
&\partial_\theta \log p_t(x;\theta) \\
=& \frac{1}{2}\div\lrp{\del_\theta {M_t(\theta)} \nabla \log p_t(x;\theta)} \\
&+ \frac{1}{2}\lin{\nabla \log p_t(x;\theta), \del_\theta {M_t(\theta)} \nabla \log p_t(x;\theta)}\\
=& \frac{1}{2} \sum_i \lin{\del_\theta {M_t(\theta)} e_i,  \nabla^2 \log p_t(x;\theta) e_i}\\
&+ \frac{1}{2} \lin{\nabla \log p_t(x;\theta), \del_\theta {M_t(\theta)} \nabla \log p_t(x;\theta)}\\
=& \frac{1}{2} \sum_i \lin{\del_\theta {M_t(\theta)} e_i,  \del_c \nabla \log p_t(x+ce_i;\theta)} \\
& + \frac{1}{2} \lin{\nabla \log p_t(x;\theta), \del_\theta {M_t(\theta)} \nabla \log p_t(x;\theta)}.
\end{align*}
Taking a derivative wrt $x$ gives
\begin{align*}
&\partial_\theta \nabla \log p_t(x;\theta) \\
=& \frac{1}{2} \sum_i \lin{\del_\theta {M_t(\theta)} e_i,  \del_c \nabla^2 \log p_t(x+ce_i;\theta)} \\
& + \lin{\del_\theta {M_t(\theta)} \nabla \log p_t(x;\theta), \nabla^2 \log p_t(x;\theta)}\\
=& \frac{1}{2} \sum_i \del_r \del_s \nabla \log p_t(x+re_i+s\del_\theta {M_t(\theta)} e_i;\theta)\\
&+ \del_s \nabla \log p_t(x+s\del_\theta {M_t(\theta)} \nabla \log p_t(x;\theta);\theta).
\end{align*}
This concludes the proof of \eqref{e:del_theta_net}. To prove \eqref{e:del_theta_net}, we combine \eqref{e:del_theta_net} with the guarantee that $\net(x,t,\phi^*(\theta)) = \nabla \log p_t(x;\theta)$ from Lemma \ref{l:mt_loss_score_matching}.

\subsection{Proof of Corollary \ref{c:del_theta_flow}}
\label{ss:c:del_theta_flow}

Recall that
$$\flow(x;\theta) = M_t(\theta)^{1/2} \net(x;\theta) = M_t(\theta)^{1/2} \nabla \log p_t(x;\theta).$$

Recall also from \eqref{e:del_theta_net} that
\begin{align*}
&\partial_{\theta} \net (x,t,\phi^*(\theta))\\
=& \frac{1}{2} \sum_{i=1}^d \del_r \del_s \net(x+re_i+s\del_{\theta} {M_t(\theta)}  e_i,t,\phi^*(\theta)) \\
&+ \del_s \net (x+s\del_{\theta} {M_t(\theta)} \net(x,t,\phi^*(\theta)),t,\phi^*(\theta)).
\end{align*}

Combining the two equations above:
\begin{align*}
    &\del_\theta \flow(x;\phi^*(\theta)) \\
    =& M_t(\theta)^{1/2} \del_\theta \net(x;\phi^*(\theta)) + \frac{1}{2}M_t(\theta)^{-1/2} \lrp{\del_\theta M_t(\theta)} \net(x;\phi^*(\theta))\\
    =& \frac{1}{2} \sum_i M_t(\theta)^{1/2} \del_r \del_s \net (x+re_i+s\del_\theta {M_t(\theta)} e_i;\phi^*(\theta)) \\
    &+ M_t(\theta)^{1/2} \del_s \net (x+s\del_\theta {M_t(\theta)} \net(x;\phi^*(\theta));\phi^*(\theta)) + \frac{1}{2} M_t(\theta)^{-1/2} \lrp{\del_\theta M_t(\theta)} \net(x;\phi^*(\theta))\\
    =& \frac{1}{2} \sum_i \del_r \del_s \flow (x+re_i+s\del_\theta {M_t(\theta)} e_i;\phi^*(\theta)) \\
    &+ \del_s \flow (x+s M_t(\theta)^{-1/2} \del_\theta {M_t(\theta)}  \flow(x;\phi^*(\theta));\phi^*(\theta)) + \frac{1}{2} M_t(\theta)^{-1} \lrp{\del_\theta M_t(\theta)} \flow(x;\phi^*(\theta)).
\end{align*}

\section{Implementation Details of $g_i$}
\label{s:g_i_impl_appendix}
For a fixed set of node locations $0=\tau_0 < \tau_1 < \cdots < \tau_{K-1}=T,$ the subspace noise schedule is defined in log–space between the smallest and largest variance values $g(0)=g_0$ and $g(T)=T$. The trainable parameters $\theta=(\theta_1,\dots,\theta_{K-1})$ define a collection of strictly positive increments
$$
s_i = \operatorname{softplus}(\theta_i), \qquad i=1,\dots,K-1,
$$
These increments are then rescaled by $\alpha$ so that their sum exactly matches the total log–gap between the endpoints,
$$
\alpha = \frac{\log T - \log g_0}{\sum_{i=1}^{K-1} s_i}.
$$
The log–values at the nodes are constructed by cumulative summation,
$$
\ell_0 = \log g_0, \qquad 
\ell_j = \ell_0 + \sum_{i=1}^{j} \alpha s_i, \quad j=1,\dots,K-1,
$$
so that $\ell_{K-1} = \log T$.

Given a time $t\in[0,T]$, one locates the enclosing interval $[\tau_{j-1},\tau_j]$ and computes the normalized position
$$
p(t) = \frac{t-\tau_{j-1}}{\tau_j - \tau_{j-1}}.
$$
The value of $g(t)$ is then obtained by linearly interpolating between successive log–nodes and exponentiating:
$$
\log g(t) = (1-p(t))\ell_{j-1} + p(t)\ell_j, 
\qquad 
g(t) = \exp\!\big(\log g(t)\big).
$$
Within each interval the derivative takes the simple form
$$
g'(t) = g(t)\frac{\ell_j - \ell_{j-1}}{\tau_j - \tau_{j-1}}.
$$

\section{Detailed Training Setup}
\label{app:training_details}

\paragraph{Initialization.}
All experiments are initialized from the corresponding pretrained EDM checkpoints.
We load both raw network weights and their exponential moving average (EMA) counterparts.
When learning diffusion schedules, we initialize schedule parameters from a pretrained
isotropic schedule model.
For conditional variants, schedule parameters are instantiated per class.
If class-wise isotropic schedules are available, they are used for initialization;
otherwise, a shared global schedule is copied across classes.

\paragraph{Joint optimization of network and schedule.}
We jointly optimize network parameters $\phi$ and schedule parameters $\theta$
using Adam ($\beta_1=0.9$, $\beta_2=0.999$, $\epsilon=10^{-8}$).
Network and schedule parameters use separate learning rates.
Gradients are accumulated over multiple micro-batches to achieve the target effective batch size.
After backpropagation, gradients are explicitly sanitized by replacing NaNs
and infinities with finite values to ensure numerical stability.

For class-conditional schedules, schedule updates are applied only to classes
present in the current batch; in distributed settings, we synchronize the set
of active classes across workers before applying optimizer steps to maintain
consistent parameter updates.

\paragraph{Learning-rate schedule.}
We employ a linear warm-up for the first $K_{\mathrm{ramp}}$ kimg. After warm-up, the learning rate follows a mild decay schedule, with a decay factor applied after the midpoint of training. The learning rate of schedule parameters scales proportionally to the network learning rate.

\paragraph{EMA update.}
We maintain an exponential moving average of network parameters throughout training. The EMA half-life is specified in kimg units. Optionally, we apply an EMA ramp-up, where the effective half-life is bounded by the number of images seen so far during early training. EMA parameters are synchronized across workers in distributed settings.

\paragraph{Distributed training.}
For large-scale experiments (e.g., ImageNet-64), we use PyTorch Distributed Data Parallel (DDP)
with NCCL backend. The global batch is evenly partitioned across workers. All parameters are broadcast from rank 0 at initialization. Gradients of network parameters are synchronized via DDP. Schedule parameters are explicitly all-reduced across workers, including zero-filled gradients for classes absent on a given worker, to guarantee consistent updates across GPUs.

\paragraph{Training budget and checkpoints.}
Across all datasets, fine-tuning processes approximately $1.2$M images in total.
We periodically save raw network weights, EMA weights, and schedule parameters.
Unless otherwise specified, evaluation is performed using the final EMA checkpoint.

\section{Additional Experimental Analysis}
\label{s:schedules}
Figures~\ref{fig:cifar10_schedule}--\ref{fig:imagenet_schedule} visualize the learned schedules across datasets, subspace parameterizations, and conditioning schemes.

First, across all datasets (Figures~\ref{fig:cifar10_schedule}--\ref{fig:ffhq_schedule}), the learned anisotropic ratios deviate systematically from one over a substantial portion of the diffusion horizon, confirming that the two subspaces are not treated symmetrically by the learned schedule. 
However, the magnitude and shape of this deviation depend strongly on the dataset. 
For CIFAR-10 (Figure~\ref{fig:cifar10_schedule}(b)) and AFHQv2 (Figure~\ref{fig:afhqv2_schedule}), the anisotropy ratio exhibits a smooth mid-horizon amplification before returning to one near the endpoints, suggesting that anisotropy is primarily active during intermediate diffusion times. 
In contrast, for FFHQ (Figure~\ref{fig:ffhq_schedule}) the ratio shows a more pronounced early amplification followed by a gradual decay, indicating stronger low-frequency emphasis at early noise levels.

Second, on the class-conditional datasets (CIFAR-10 and ImageNet-64; Figures~\ref{fig:cifar10_schedule} and~\ref{fig:imagenet_schedule}), class-conditional training induces clear inter-class variability in both isotropic and anisotropic schedules. 
In CIFAR-10 (Figure~\ref{fig:cifar10_schedule}(c,d)), the class-conditional isotropic schedules display substantial dispersion around the geometric mean, and the anisotropic schedules further amplify these differences. 
On ImageNet-64 (Figure~\ref{fig:imagenet_schedule}(a--f)), this effect becomes more structured: while class-conditional isotropic schedules vary moderately, the anisotropic variants reveal consistent class-dependent shifts in both geometric means and anisotropy ratios. 
Notably, the PCA-based conditional schedules in Figure~\ref{fig:imagenet_schedule}(d--f) exhibit tighter concentration than their DCT counterparts (Figure~\ref{fig:imagenet_schedule}(b,c)), suggesting that class-adaptive subspaces reduce unnecessary variability.

\begin{figure*}[ht]
    \centering
    \includegraphics[width=0.8\linewidth]{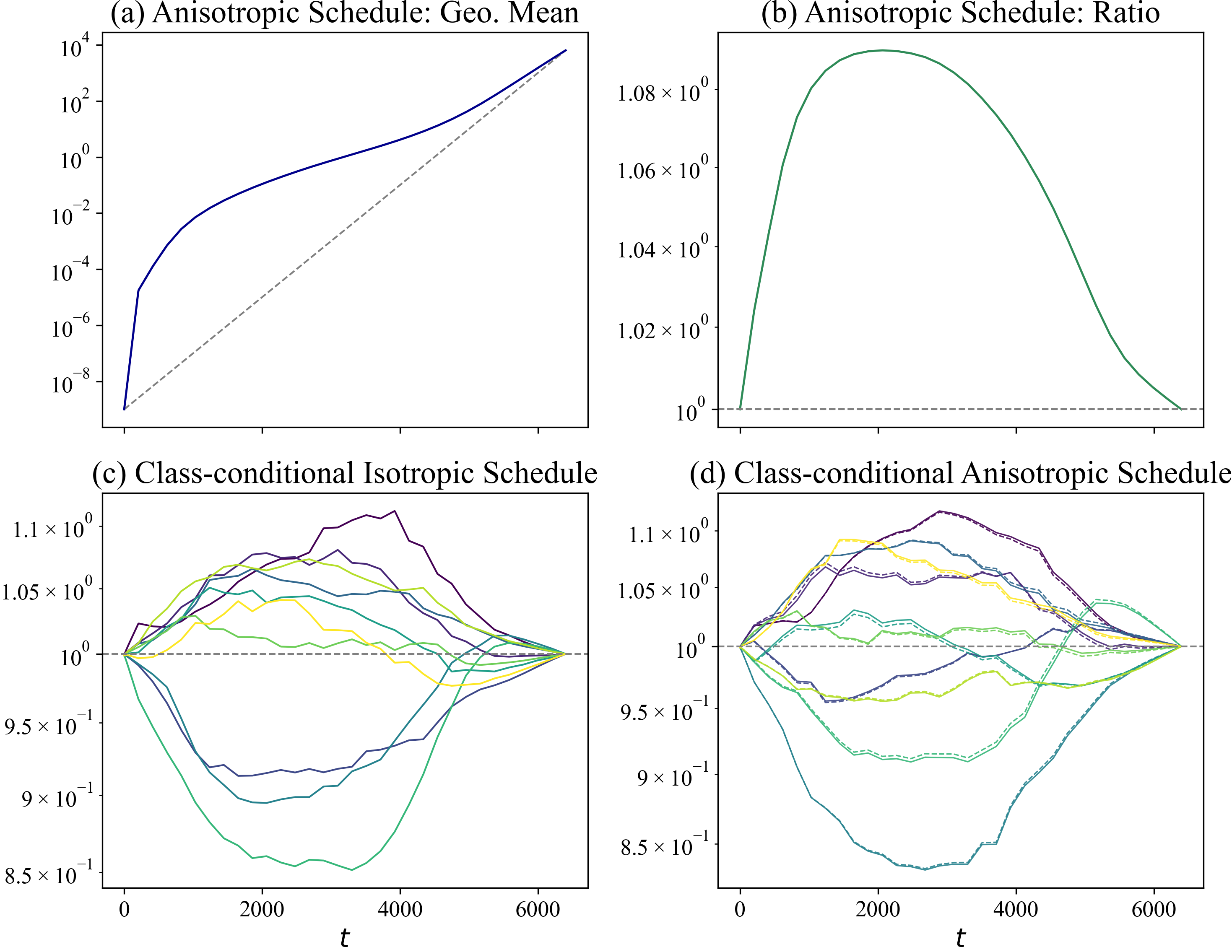}
    \caption{\textbf{CIFAR-10 learned schedule analysis.}
(a) PCA-based geometric mean $\sqrt{g^{\mathrm{PCA}}_{1}(t)g^{\mathrm{PCA}}_{2}(t)}$ with a log-linear reference (gray dashed; log $y$-axis).
(b) PCA anisotropy ratio $g^{\mathrm{PCA}}_{1}(t)/g^{\mathrm{PCA}}_{2}(t)$.
(c) Class-conditional isotropic schedules $g_y(t)/\bar g(t)$, where $\bar g(t)=\big(\prod_{y=1}^{C} g_y(t)\big)^{1/C}$ with $C=\#\text{class}=10$.
(d) Class-conditional anisotropic schedules over DCT subspaces:
$g^{\mathrm{DCT}}_{k,y}(t)/\bar g_k(t)$ for $k\in\{1,2\}$ (solid/dashed),
where $i$ indexes classes and $\bar g_k(t)$ is the geometric mean across classes.}
    \label{fig:cifar10_schedule}
\end{figure*}

\begin{figure*}[ht]
    \centering
    \includegraphics[width=0.3\linewidth]{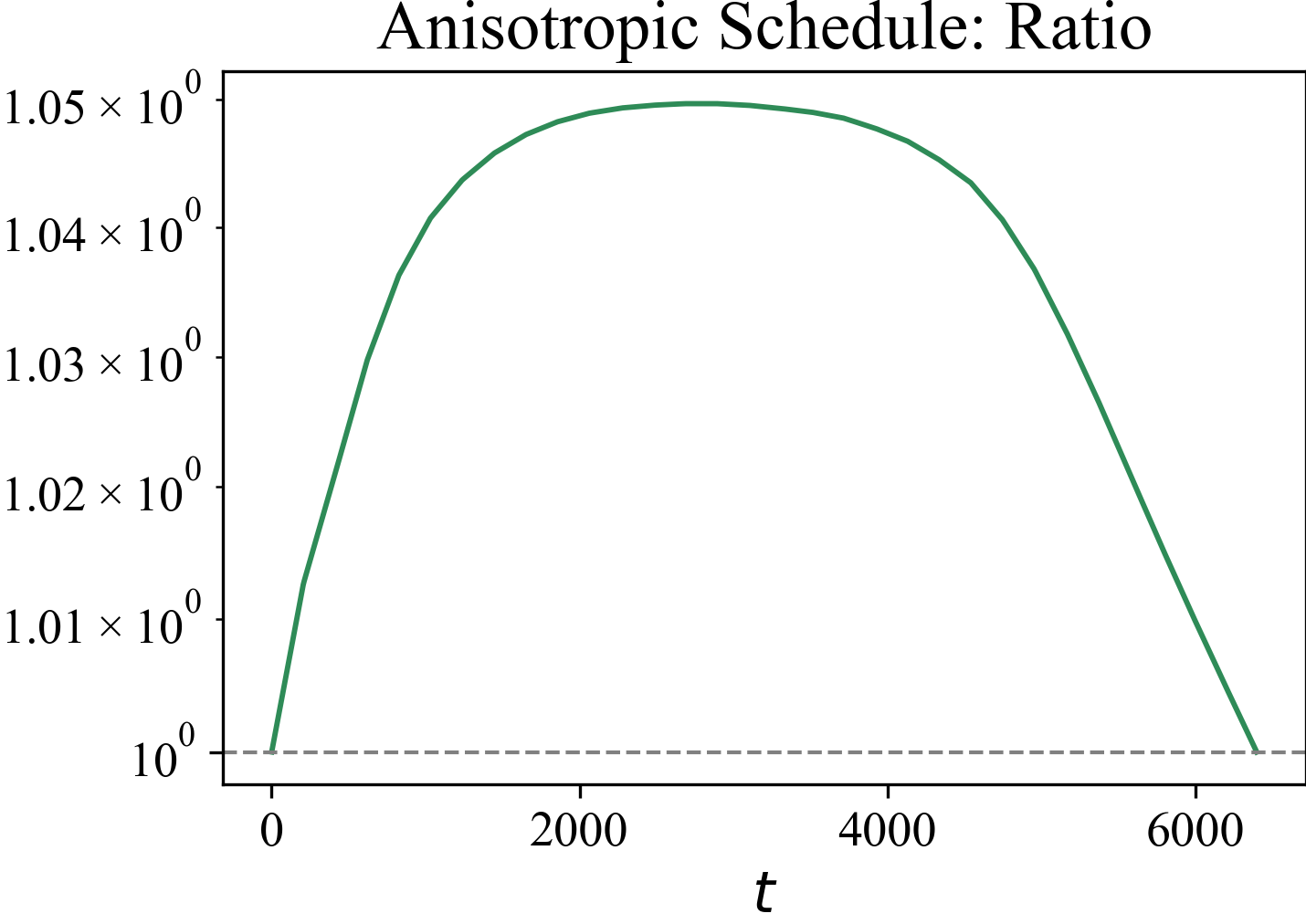}
    \caption{\textbf{AFHQv2 learned schedule analysis.}
Ratio between the two learned DCT-based schedules $g^{\mathrm{DCT}}_1(t)$ and $g^{\mathrm{DCT}}_2(t)$ over time.}
    \label{fig:afhqv2_schedule}
\end{figure*}

\begin{figure*}[ht]
    \centering
    \includegraphics[width=0.3\linewidth]{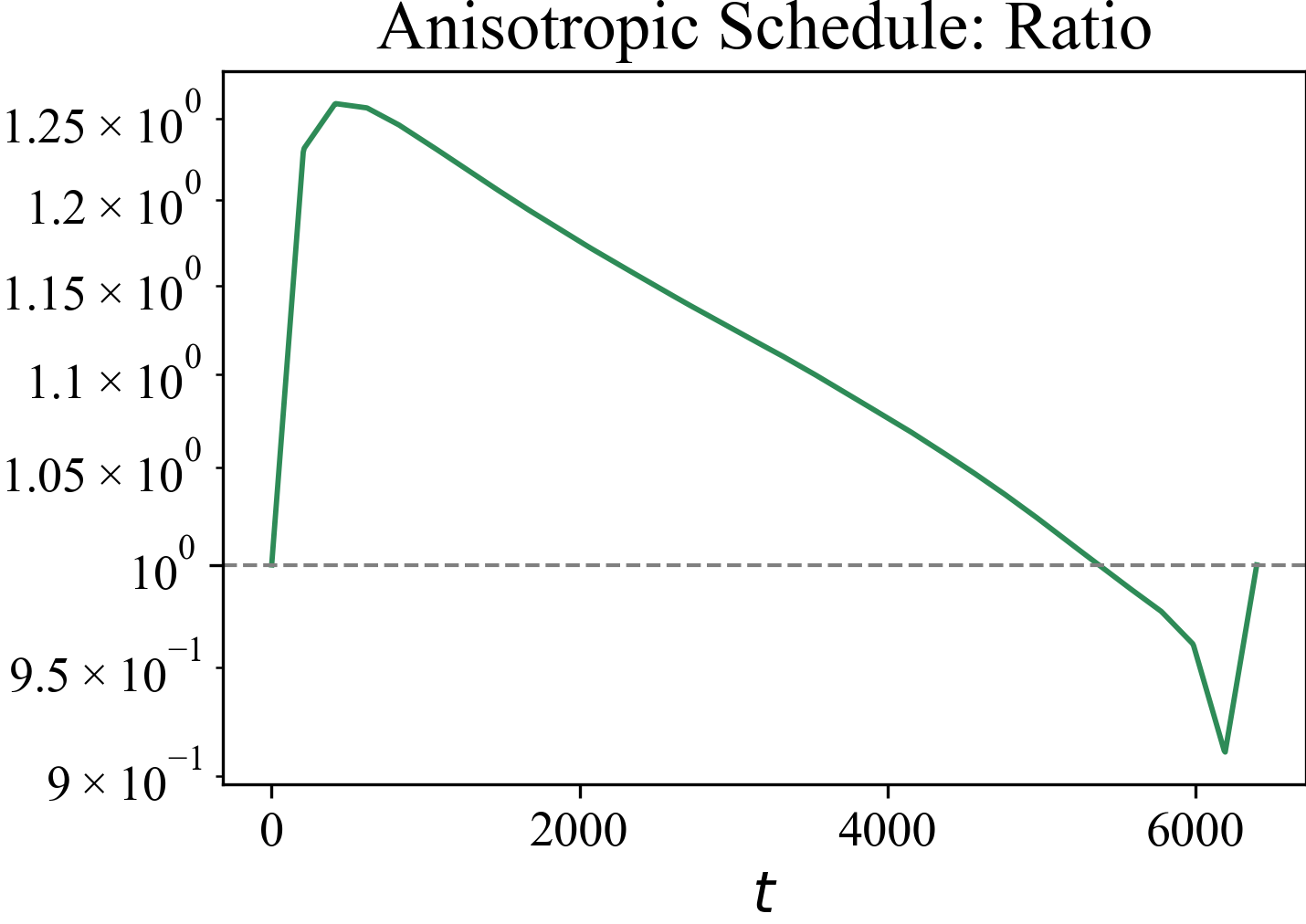}
    \caption{\textbf{FFHQ learned schedule analysis.}
Ratio between the two learned DCT-based schedules $g^{\mathrm{DCT}}_1(t)$ and $g^{\mathrm{DCT}}_2(t)$ over time.}
    \label{fig:ffhq_schedule}
\end{figure*}

\begin{figure*}[ht]
    \centering
    \includegraphics[width=\linewidth]{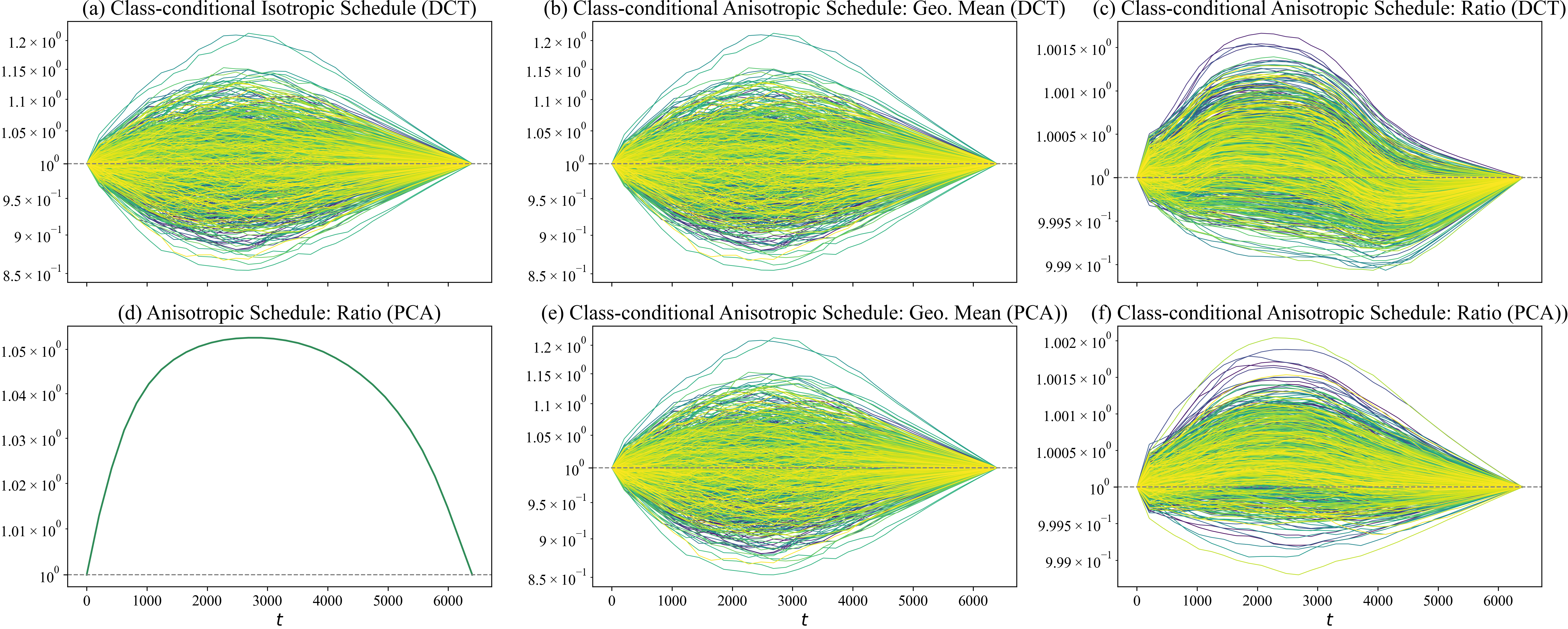}
    \caption{\textbf{ImageNet-64 learned schedule analysis.}
(a) Class-conditional isotropic schedules shown as $g_y(t)/\bar g(t)$, where $\bar g(t)$ denotes the geometric mean across classes.
(b) Class-conditional anisotropic schedules summarized by the geometric mean $\sqrt{g^{\mathrm{DCT}}_{1,y}(t)g^{\mathrm{DCT}}_{2,y}(t)}$ and normalized by its class-wise geometric mean.
(c) Class-conditional anisotropy ratios $g^{\mathrm{DCT}}_{1,y}/g^{\mathrm{DCT}}_{2,y}$.
(d) PCA-based (class-conditional basis) anisotropic schedule ratio $g^{\mathrm{PCA}}_{1,y}(t)/g^{\mathrm{PCA}}_{2,y}(t)$.
(e) PCA-based (class-conditional basis) class-conditional anisotropic schedules shown as $\sqrt{g^{\mathrm{PCA}}_{1,y}(t)g^{\mathrm{PCA}}_{2,y}(t)}$ normalized by the geometric mean across classes.
(f) PCA-based (class-conditional basis) class-conditional anisotropy ratios $g^{\mathrm{PCA}}_{1,y}(t)/g^{\mathrm{PCA}}_{2,y}(t)$.
All ratios are plotted on a logarithmic $y$-axis, and dashed horizontal lines indicate the reference value $1$.}

    \label{fig:imagenet_schedule}
\end{figure*}

\section{Geometric Structure of Learned Subspaces}
\begin{figure}
    \centering
    \includegraphics[width=0.9\linewidth]{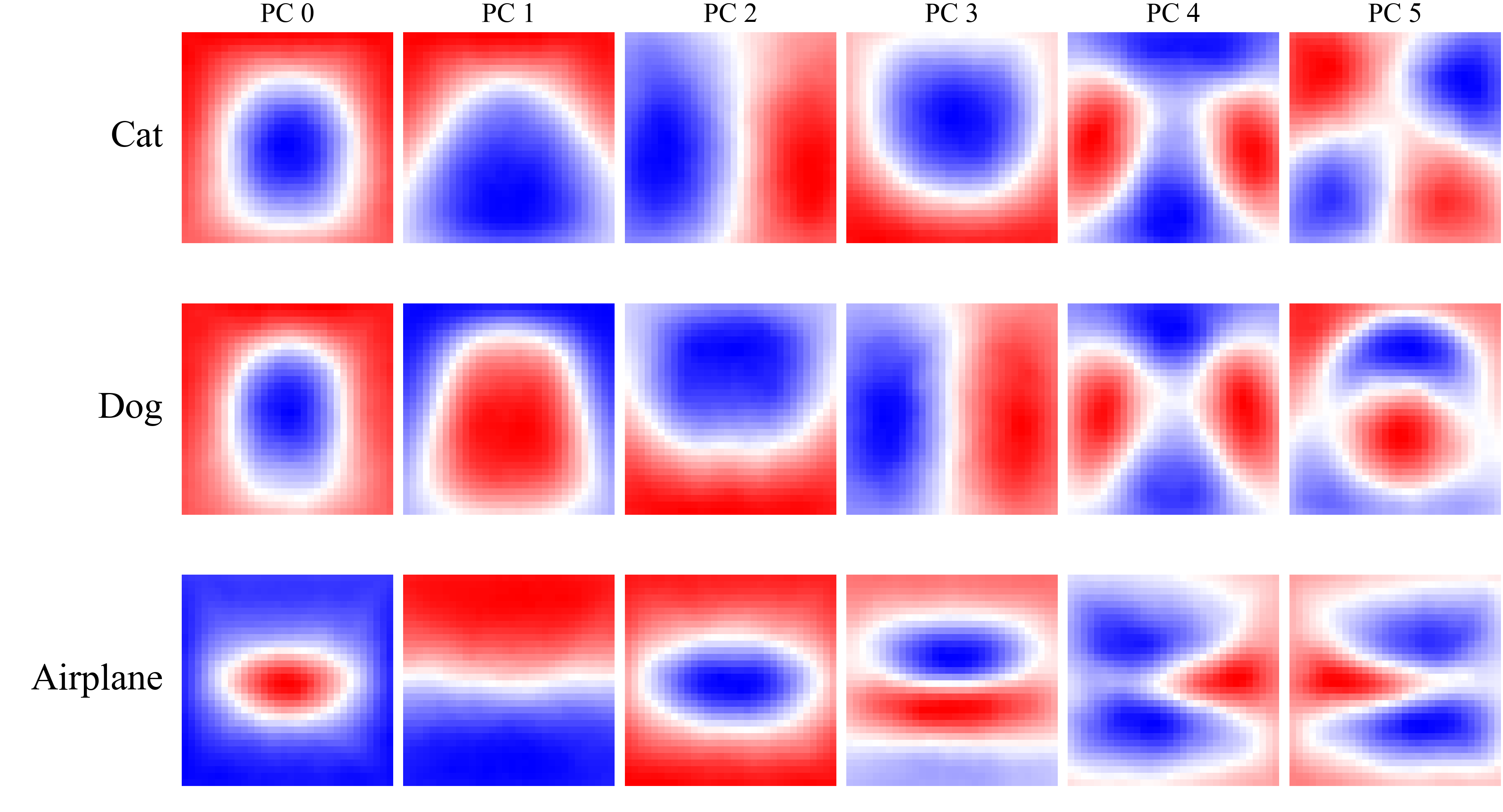}
    \caption{\textbf{Class-specific PCA subspaces.}
    Leading principal components for representative CIFAR-10 classes (cat, dog, airplane), reshaped as spatial patterns.
    Semantically related classes (e.g., cat and dog) exhibit qualitatively similar dominant structures, whereas visually distinct classes (e.g., airplane) display different principal directions.}
    \label{fig:basis_compare}
\end{figure}

\begin{figure}
    \centering
    \includegraphics[width=0.65\linewidth]{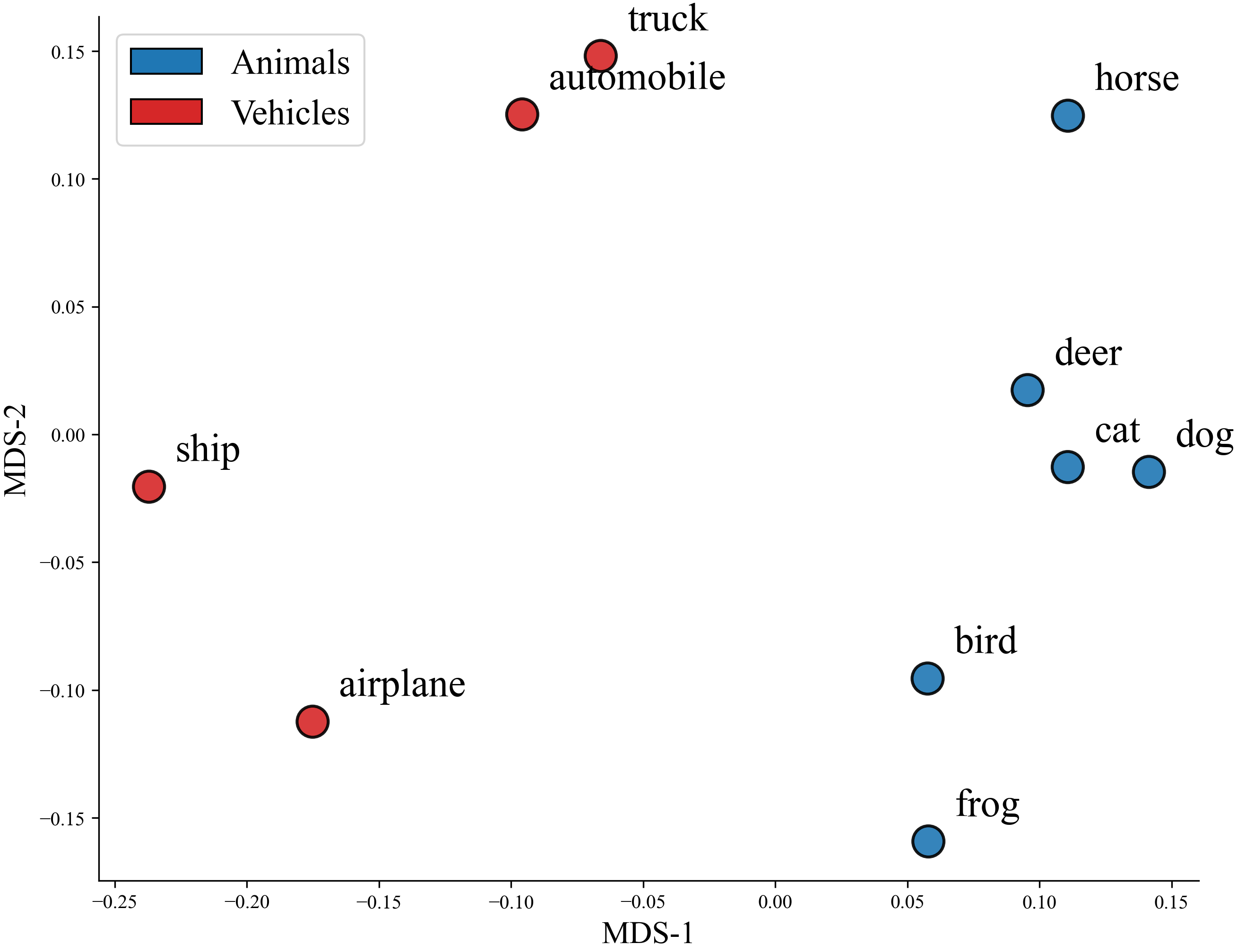}
    \caption{\textbf{Subspace distance embedding via classical MDS.}
    Pairwise distances between class-specific PCA subspaces are computed using the Frobenius norm between orthogonal projectors.
    Classical multidimensional scaling (MDS) embeds the resulting distance matrix into two dimensions.
    Animal classes cluster together and are separated from vehicle classes, indicating that the learned subspaces capture semantically structured geometry.}
    \label{fig:basis_mds}
\end{figure}

\subsection{PCA Visualization}
To analyze the geometry of the learned PCA subspaces, we first visualize the leading principal components for representative classes (cat, dog, airplane) in Figure~\ref{fig:basis_compare}. Each row shows the top principal directions reshaped into spatial patterns. We observe that semantically related classes (e.g., cat and dog) exhibit qualitatively similar dominant spatial structures, whereas structurally distinct classes (e.g., airplane) show noticeably different patterns.

To quantify cross-class similarity, we compute pairwise distances between subspaces using the Frobenius norm of projector differences, i.e.,
$$
d\left(Q_1, Q_2\right)=\frac{\left\|Q_1 Q_1^{\top}-Q_2 Q_2^{\top}\right\|_F}{\sqrt{2 k}},
$$
where $Q_1, Q_2$ are orthonormal bases.
We then apply classical multidimensional scaling (MDS) to embed this distance matrix into two dimensions for visualization. MDS constructs a low-dimensional embedding whose pairwise Euclidean distances best approximate the original subspace distances. Figure~\ref{fig:basis_mds} reveals clear semantic clustering, with animal classes grouped together and separated from vehicle classes.

These observations suggest that the learned PCA subspaces capture semantically meaningful geometric structure.

\subsection{2D-DCT Transform Visualization}
\label{s:2d_dct_appendix}
Let $H$ be the image side length and $d = H^2$.  
The two-dimensional DCT (type-II) basis over $\mathbb{R}^{H \times H}$ is defined as follows.  
For each pair $(p,q)\in\{0,\dots,H-1\}^2$, the basis is
\[
\begin{aligned}
\Phi_{p,q}(x,y)
&= \gamma_p \gamma_q 
\cos\!\Big(\tfrac{(2x+1)p\pi}{2H}\Big)
\cos\!\Big(\tfrac{(2y+1)q\pi}{2H}\Big), \\
&\qquad x,y=0,\dots,H-1 .
\end{aligned}
\]

with normalization factors
\[
\gamma_p =
\begin{cases}
H^{-1/2}, & p=0, \\[6pt]
\sqrt{2}H^{-1/2}, & p>0,
\end{cases}
\qquad
\gamma_q =
\begin{cases}
H^{-1/2}, & q=0, \\[6pt]
\sqrt{2}H^{-1/2}, & q>0.
\end{cases}
\]
Vectorizing each $\Phi_{p,q}$ into $\mathbb{R}^d$ and enumerating them yields the orthonormal basis $\{v_1,\dots,v_{H^2}\}$, which are the 2D-DCT basis of $\mathbb{R}^d$. Please refer to Figure~\ref{fig:2d_dct_ex} for example 2D-DCT bases. 

\begin{figure*}[ht]
    \centering
    \includegraphics[width=1\linewidth]{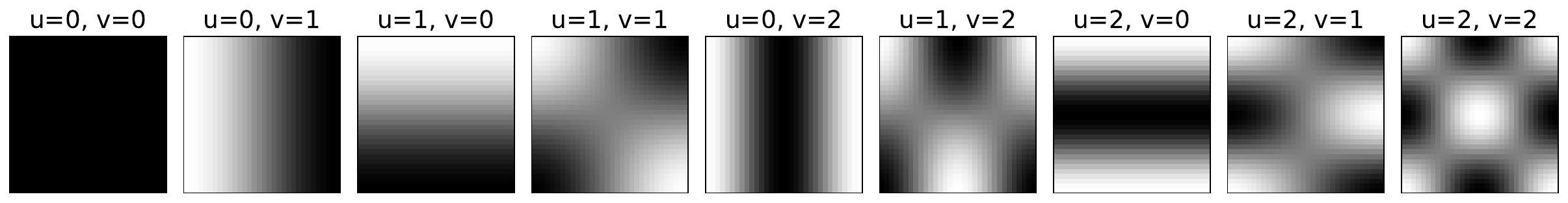}
    \vspace{-18pt}
    \caption{The first nine 2D-DCT bases ordered by increasing frequency.}
    
    \label{fig:2d_dct_ex}
\end{figure*}

\end{document}